\relax
\documentclass[letterpaper, final]{article}
\usepackage{aaai22}
\usepackage{natbib}
\usepackage{times}
\usepackage{helvet}
\usepackage{courier}
\usepackage{url}
\usepackage{graphicx}
\frenchspacing  
\setlength{\pdfpagewidth}{8.5in}  
\setlength{\pdfpageheight}{11in}  
\usepackage[compact]{titlesec}
\raggedbottom

\usepackage[T1]{fontenc}

\usepackage{multirow}
\usepackage[ruled,noline,noalgohanging]{algorithm2e}
\usepackage{stmaryrd}
\usepackage{tikz}
\usetikzlibrary{arrows,positioning,decorations.markings}
\tikzset{
  varnode/.style={rectangle,outer sep=0mm},
  varnodenoperi/.style={rectangle,outer sep=-1mm},
  ourarrow/.style={>=stealth}, 
  ourarc/.style={>=stealth,thick,arc}}
\usepackage{pgfplots}
\usepackage{amsthm}
\usepackage{amssymb}

\newtheorem{mylem}{Lemma}
\newtheorem{mydef}{Definition}

\newtheorem{mythm}{Theorem}

\usepackage{multicol}
\usepackage{latexsym}
\usepackage[inline]{enumitem}
\setlist[enumerate]{label=(\roman*)}
\usepackage[Euler]{upgreek}
\usepackage{tabularx}
\usepackage{mathtools}
\usepackage{mathtools}
\usepackage{subcaption}
\usepackage{isabelle, isabellesym}

\pdfinfo{
/Title (Formal Semantics and Formally Verified Validation for Temporal Planning)
/Author (Mohammad Abdulaziz and Lukas Koller)
/Keywords (Formal Verification, AI Planning, Trustworthy AI)}

\title{}
\author{\\}
\affiliations{}

\usepackage{textcomp}
\usepackage{listings}
\lstset{
    xleftmargin=0pt
    , xrightmargin=0pt
    , frame=single
    , rulecolor=\color{gray}
    , basicstyle=\ttfamily\tiny
    , keywordstyle=\ttfamily\textbf
    , morekeywords={record,definition,abbreviation,primrec,function,theorem,lemma,proof,datatype,where,let,in}
    , mathescape
    , upquote=true
    , breaklines=true
}
\lstdefinestyle{inline}{
    basicstyle=\ttfamily\small
}

\usepackage[obeyFinal]{todonotes}
\presetkeys
    {todonotes}
    {inline,backgroundcolor=yellow,bordercolor=black,linecolor=cyan}{}
\tikzstyle{inlinenotestyle} = [notestyle,text width=\linewidth,inner sep=1em,outer sep=2pt,align=justify]
\usepackage{bm}

\usepackage[draft]{hyperref}
\usepackage{graphicx}

\usepackage[normalem]{ulem}
\usepackage{lineno}

\usepackage[most]{tcolorbox}
\tcbuselibrary{breakable,hooks,skins,theorems}
\newtcbtheorem{IsabelleTheorem}{Listing}{
    enhanced jigsaw
    , breakable
    , float=h
    , top=0pt
    , right=0pt
    , bottom=0pt
    , left=0pt
    , boxrule=0pt
    , toprule=1pt
    , bottomrule=1pt
    , titlerule=1pt
    , colframe=black
    , sharp corners
    , colbacktitle=white
    , coltitle=black
    , fonttitle=\tiny
    , colback=white
    , fontupper=\ttfamily\tiny
    , before upper={\parindent0em}
}{isa}

\lstdefinelanguage{isabelle}{
    morekeywords={record,type_synonym,definition,fun,function,primrec,where,lemma,theorem,unfolding,by,shows,assumes,and,datatype,using,abbreviation
,moreover,have,hence,thus,qed,proof,let,ultimately,show,next,in}
    , sensitive=true
    , showstringspaces=false
    , framerule=0pt
    , xleftmargin=2em
    , numbers=left
    , numberstyle=\ttfamily\tiny
    , firstnumber=1
    , stepnumber=2
    , basicstyle=\ttfamily\tiny
    , breaklines=true
    , showspaces=false
    , morecomment=[l]{--}
    , morecomment=[s]{(*}{*)}
    , commentstyle=\color{gray}
    , morestring=[b]"
    , literate={\\<times>}{{$\times$}}{1} {\\<equiv>}{{$\equiv$}}{1} {\\<forall>}{{$\forall$}}{1} {\\<exists>}{{$\exists$}}{1} {\\<and>}{{$\land$}}{1}
        {\\<in>}{{$\in$}}{1} {\\<Rightarrow>}{{$\Rightarrow$}}{1} {\\<lambda>}{{$\lambda$}}{1} {::}{{$::$}}{1}
        {\\<subseteq>}{{$\subseteq$}}{1} {\\<^sub>m}{{$_m$}}{1} {\\<longleftrightarrow>}{{$\longleftrightarrow$}}{3}
        {\\<pi>}{{$\pi$}}{1} {\\<delta>}{{$\delta$}}{1} {\\<lbrakk>}{{$\llbracket$}}{1} {\\<rbrakk>}{{$\rrbracket$}}{1}
        {\\<Longrightarrow>}{{$\Longrightarrow$}}{3} {\\<not>}{{$\lnot$}}{1} {\\<le>}{{$\le$}}{1} {\\<rightharpoonup>}{{$\rightharpoonup$}}{2}
        {\\<^sub>\\<V>}{{$_{\mathcal V}$}}{1} {\\<lparr>}{{$\llparenthesis$}}{1} {\\<rparr>}{{$\rrparenthesis$}}{1}
        {\\<leftarrow>}{{$\leftarrow$}}{1} {\\<^sub>\\<O>}{{$_{\mathcal O}$}}{1} {\\<^sub>I}{{$_{\texttt{I}}$}}{1}
        {\\<^sub>G}{{$_{\texttt{G}}$}}{2} {\\<phi>}{{$\varphi$}}{1} {\\<Phi>}{{$\Phi$}}{1} {\\<psi>}{{$\psi$}}{1} {\\<Psi>}{{$\Psi$}}{1}
        {\\<^sub>S}{{$_{\texttt S}$}}{1} {\\<inverse>}{{$^{-1}$}}{1} {\\<^sub>O}{{$_{\texttt O}$}}{1} {\\<^bold>\\<And>}{{$\bm\bigwedge$}}{1}
        {\\<^bold>\\<or>}{{$\bm\lor$}}{1} {\\<^sub>G}{{$_{\texttt G}$}}{1} {\\<Pi>}{{$\Pi$}}{1} {\\<^sub>I}{{$_{\texttt I}$}}{1} {\\<noteq>}{{$\neq$}}{1}
        {\\<bottom>}{{$\bot$}}{1} {\\<^sub>+}{{$_\texttt +$}}{1} {\\<^bold>\\<and>}{{$\bm\land$}}{1} {\\<^bold>\\<not>}{{$\bm\lnot$}}{1}
        {\\<^sub>1}{{$_1$}}{1} {\\<^sub>2}{{$_2$}}{1} {\\<A>}{{$\mathcal A$}}{1} {\\<Turnstile>}{{$\models$}}{2} {\\<^sub>\\<forall>}{{$_\forall$}}{1}
        {\\<^sub>0}{{$_0$}}{1} {\\<tau>}{{$\tau$}}{1}  {\\<^sub>\\<Omega>}{{$_\Omega$}}{1} {\\<^sub>V}{{$_V$}}{1} {\\<^bold>\\<Or>}{{$\bm\bigvee$}}{1}
        {\\<^sub>P}{{$_\texttt P$}}{1} {\\<^sub>X}{{$_\texttt X$}}{1} {\\<longrightarrow>}{{$\longrightarrow$}}{2} {\\<or>}{{$\lor$}}{1} {\\<^sub>\\<pi>}{{$_\pi$}}{1}
        {\\<^sub>s}{{$_s$}}{1} {\\<^sub>t}{{$_t$}}{1} {\\<^sub>a}{{$_a$}}{1} {\\<^sub>r}{{$_r$}}{1} {\\<^sub>t}{{$_t$}}{1} {\\<^sub>e}{{$_e$}}{1} {\\<^sub>n}{{$_n$}}{1} {\\<^sub>d}{{$_d$}}{1} {\\<^sub>i}{{$_i$}}{1} {\\<^sub>v}{{$_v$}}{1} {\\<^sub>j}{{$_j$}}{1} {\\<^sub>b}{{$_b$}}{1} {\\<inter>}{{$\cap$}}{1} {\\<union>}{{$\cup$}}{1} {\\<Union>}{{$\bigcup$}}{1} {\\<^sup>c\\<TTurnstile>\\<^sub>=}{{${}^c\models_=$}}{1}
        {\\<open>}{{<}}{1} {\\<close>}{{>}}{1} {\\<langle>}{{$\langle$}}{1} {\\<rangle>}{{$\rangle$}}{1} {\\<ge>}{{$\ge$}}{1}
}

\newtcblisting[auto counter]{IsabelleSnippet}[2][]{
    listing options={
        language=isabelle
    }
    , listing only
    , enhanced jigsaw
    , breakable
    , top=-.5em
    , right=0pt
    , bottom=-.5em
    , left=0pt
    , boxrule=0pt
    , toprule=1pt
    , bottomrule=1pt
    , titlerule=1pt
    , colframe=black
    , sharp corners
    , colbacktitle=white
    , coltitle=black
    , fonttitle=\tiny
    , colback=white
    , fontupper=\ttfamily\tiny
    , before upper={\parindent0em}
    , title=Listing \thetcbcounter: #2
    , #1
}

\lstdefinelanguage{pddl}{
  sensitive=false,    
  morecomment=[l]{;}, 
  alsoletter={:,-},   
  morekeywords={
    define,domain,problem,not,and,or,when,forall,exists,either,
    :domain,:requirements,:types,:objects,:constants,
    :predicates,:action,:parameters,:precondition,:effect,
    :fluents,:primary-effect,:side-effect,:init,:goal,
    :strips,:adl,:equality,:typing,:conditional-effects,
    :negative-preconditions,:disjunctive-preconditions,
    :existential-preconditions,:universal-preconditions,:quantified-preconditions,
    :functions,assign,increase,decrease,scale-up,scale-down,
    :metric,minimize,maximize,
    :durative-actions,:duration-inequalities,:continuous-effects,
    :durative-action,:duration,:condition
  },
  numberstyle=\ttfamily\tiny,
  basicstyle=\ttfamily\tiny
}

\newtcblisting[use counter from=IsabelleSnippet]{PddlListing}[2][]{
  listing options={
    language=pddl,
    escapechar=|
  }, 
  listing only, 
  enhanced jigsaw,
  breakable,
  top=-.5em,
  right=0pt,
  bottom=-.5em,
  left=0pt,
  boxrule=0pt,
  toprule=1pt,
  bottomrule=1pt,
  titlerule=1pt,
  colframe=black,
  sharp corners,
  colbacktitle=white,
  coltitle=black,
  fonttitle=\tiny,
  colback=white,
  fontupper=\ttfamily\tiny,
  before upper={\parindent0em},
  title=Listing \thetcbcounter: #2,
  #1
}

\lstdefinestyle{isainline}{
  language=isabelle,
  basicstyle=
    \ttfamily\small
}

\newcommand{\act}{\ensuremath{\pi}}

\makeatletter
\newcommand{\oset}[2]{
  {\mathop{#2}\limits^{\vbox to -1\ex@{\kern-\tw@\ex@
   \hbox{\scriptsize #1}\vss}}}}
\makeatother

\renewcommand{\vec}[1]{\ensuremath{\oset{$\rightarrow$}{\ensuremath{#1}}}}
\newcommand{\as}{\ensuremath{\vec{{\act}}}}

\newcommand{\satpreas}[2]{\ensuremath{sat_precond_as(s, \as)}}

\renewcommand{\v}{\ensuremath{\mathit{v}}}

\providecommand{\graph}{}
\providecommand{\cal}{}
\renewcommand{\cal}{}
\renewcommand{\graph}{{\cal G}}

\newcommand{\Omit}[1]{}

\newcommand{\mohammad}[1]{Mohammad: \textcolor{green}{#1}}
\newcommand{\lukas}[1]{Lukas: \textcolor{red}{#1}}

\newcommand{\snapshotsymbol}{|\kern-.7ex\raise.08ex\hbox{\scalebox{0.7}{$\bullet$}}}

\newcommand{\ssgraph}[1]{\graph_\ss}

\renewcommand{\prod}{\ensuremath{{{{{\mathlarger{\mathlarger {{\mathlarger {\Pi}}}}}}}}}}

\newcommand{\acta}{{\ensuremath{\act^1}}}
\newcommand{\actb}{{\ensuremath{\act^2}}}

\newcommand{\planningproblem}{\Uppi}

\tikzset{dots/.style args={#1per #2}{line cap=round,dash pattern=on 0 off #2/#1}}
\providecommand{\moham}[1]{\fbox{{\bf \@Mohammad: }#1}}

\newcommand{\goal}{{\ensuremath{\mathcal{G}}}}
\newcommand{\init}{{\ensuremath{\mathcal{I}}}}

\newcommand{\planningProb}{\planningproblem}
\newcommand{\propositions}{P}
\newcommand{\actions}{\delta}
\newcommand{\initState}{\init}
\newcommand{\goalState}{\goal}
\newcommand{\planningState}{M}
\newcommand{\snapAction}{\act}
\newcommand{\snapActionPre}{\snapAction_\textit{pre}}
\newcommand{\snapActionAdd}{\snapAction_\textit{add}}
\newcommand{\snapActionDel}{\snapAction_\textit{del}}
\newcommand{\snapActiona}{\acta}
\newcommand{\snapActionPrea}{{\snapActiona_\textit{pre}}}
\newcommand{\snapActionAdda}{{\snapActiona_\textit{add}}}
\newcommand{\snapActionDela}{{\snapActiona_\textit{del}}}
\newcommand{\snapActionb}{\actb}
\newcommand{\snapActionPreb}{{\snapActionb_\textit{pre}}}
\newcommand{\snapActionAddb}{{\snapActionb_\textit{add}}}
\newcommand{\snapActionDelb}{{\snapActionb_\textit{del}}}
\newcommand{\durAction}{\act}
\newcommand{\durActionStart}{\durAction_\textit{start}}
\newcommand{\durActionEnd}{\durAction_\textit{end}}
\newcommand{\durActionInv}{\durAction_\textit{inv}}

\newcommand{\plan}{\ensuremath{\as}}
\newcommand{\happening}{h}
\newcommand{\happeningActs}{A}
\newcommand{\happeningTime}{r}
\newcommand{\happTimePoint}{t}
\newcommand{\happTimePoints}{\textit{htps}}
\newcommand{\recSetActs}{B}
\newcommand{\recSetInvs}{I}

\newcommand{\floorType}{floor}
\newcommand{\elevatorType}{elevator}
\newcommand{\passengerType}{passenger}
\newcommand{\elevatorAtPred}{el-at}
\newcommand{\passengerAtPred}{p-at}
\newcommand{\inElevatorPred}{in-el}
\newcommand{\elevatorDoorOpenPred}{el-op}
\newcommand{\elevatorAtPredInline}[2]{\textit{\small(\elevatorAtPred~#1~#2)}}
\newcommand{\passengerAtPredInline}[2]{\textit{\small(\passengerAtPred~#1~#2)}}
\newcommand{\inElevatorPredInline}[2]{\textit{\small(\inElevatorPred~#1~#2)}}
\newcommand{\elevatorDoorOpenPredInline}[1]{\textit{\small(\elevatorDoorOpenPred~#1)}}
\newcommand{\elevatorDurationFunc}{el-dur}
\newcommand{\moveElevatorAct}{mv}
\newcommand{\openElevatorDoorAct}{op}
\newcommand{\closeElevatorDoorAct}{cl}
\newcommand{\enterElevatorAct}{en}
\newcommand{\exitElevatorAct}{ex}

\newcommand{\lstinlinemacro}[1]{\small\textit{#1}}
\newcommand{\insertActionAlgo}{\operatorname{insert-action}}
\newcommand{\simplifyActionAlgo}{\operatorname{simplify-action}}
\newcommand{\simplifyPlanAlgo}{\operatorname{simplify-plan}}
\newcommand{\validHapSeqAlgo}{\operatorname{valid-hap-seq}}
\newcommand{\checkPlanAlgo}{\operatorname{check-plan}}

\newcommand{\happSeq}{H}
\newcommand{\false}{\textit{False}}
\newcommand{\true}{\textit{True}}
 
\title{Formal Semantics and Formally Verified Validation for Temporal Planning}

\author{Mohammad Abdulaziz$^{1,2}$ {\normalfont and} Lukas Koller$^1$\\
{\normalfont \small $^{1}$Techniche Universität München, Germany}\\
{\normalfont \small $^{2}$King's College London, United Kingdom}
}
\begin{document}
\maketitle

\begin{abstract}
  We present a simple and concise semantics for temporal planning.
  Our semantics are developed and formalised in the logic of the interactive theorem prover Isabelle/HOL.
  We derive from those semantics a validation algorithm for temporal planning and show, using a formal proof in Isabelle/HOL, that this validation algorithm implements our semantics.
  We experimentally evaluate our verified validation algorithm and show that it is practical.

\end{abstract}

\renewcommand{\mohammad}[1]{}
\renewcommand{\lukas}[1]{}
\renewenvironment{isabelle}{
}{
  \ignorespacesafterend
}
\renewenvironment{isabellebody}{
}{
  \ignorespacesafterend
}
\renewenvironment{isamarkuptext}{
}{
  \ignorespacesafterend
}
\newcommand{\noindent{\begin{minipage}[t]{0.46\textwidth}\renewcommand\normalsize{\tiny}
      \normalsize \texttt{\input{isabelle_snippets/output/document/}}\end{minipage}}}[1]{\noindent{\begin{minipage}[t]{0.46\textwidth}\renewcommand\normalsize{\tiny}
      \normalsize \texttt{\input{isabelle_snippets/output/document/#1}}\end{minipage}}}

\section{Introduction}
\label{sec:intro}

Although, performance-wise, planning algorithms and systems are very scalable and efficient, as shown by different planning competitions~\cite{ipc98,DBLP:journals/aim/ColesCOJLSY12,vallati20152014}, there is still to be desired when it comes to their trustworthiness, which is crucial to their wide adoption.
Consequently, there have been substantial efforts to improve the trustworthiness of planning systems~\cite{howey2004val,fox2005validating,eriksson2017unsolvability,abdulaziz2018formally,abdulaziz2018formallyVal,DBLP:conf/aaai/CimattiMR17,DBLP:conf/itp/AbdulazizGN19}.
A basic task when it comes to the trustworthiness of planning systems is that of \emph{plan validation}.
In its most basic form, this task is solved by a plan validator, which is a program that, given a planning problem and a candidate plan, confirms whether the candidate plan indeed solves the problem.
This boosts the trustworthiness of a plan chiefly because the plan validator should be a simple piece of software that can be more easily inspected than the planning system that computed the plan and, accordingly, less likely to have mistakes.

One challenge to plan validation is that the semantics of planning languages and formalisms can be too complicated.
This makes the validator a rather complicated piece of software defeating the trustworthiness appeal of the whole approach.
This is especially the case for advanced planning formalisms, like temporal planning~\cite{fox2003pddl2}, hybrid planning, and planning problems with processes and events~\cite{fox2002pddl+}.
This problem is further exacerbated by the low-level languages in which plan validators are usually implemented, e.g.\ the plan validation system used for most planning competitions, VAL~\cite{howey2004val}, is implemented in C++.
Another challenge to plan validation is that the semantics of planning languages have ambiguities, which lead to different interpretations of what constitutes a correct plan. 
E.g.\ there are multiple interpretations of sub-typing using ``Either'' in PDDL.

In this work we address the aforementioned challenges using an interactive theorem prover (ITP).
In particular, we use the ITP Isabelle/HOL~\cite{DBLP:books/sp/NipkowPW02}, which implements a formal mathematical system combining higher-order logic (HOL) and simple type theory.
Our first contribution is that we formally specify an abstract syntax for the temporal fragment of PDDL 2.1 in Isabelle/HOL and, based on that, formalise its semantics.
Compared to a pen-and-paper semantics, this has the advantage that it removes any room for ambiguity.
Furthermore, during formalising this fragment of PDDL, we found that certain parts of the semantics as specified by~\citeauthor{fox2003pddl2} could be simplified.
As our second contribution, we implement an executable plan validator for the temporal part of PDDL2.1 and we formally verify, using Isabelle/HOL, that it correctly implements the semantics which we formalised.
Our validator checks \begin{enumerate*}\item if a given problem and the candidate plan are well-formed, and \item if the candidate plan is indeed a solution to the problem.\end{enumerate*}
Lastly, we experimentally show that this validator is practical and compare it with VAL.
 
\newcommand{\egactionaDef}{(\openElevatorDoorAct\; e\ensuremath{_1})}
\newcommand{\egactionbDef}{(\enterElevatorAct\ p\ensuremath{_0}\ e\ensuremath{_1}\ f\ensuremath{_1})}
\newcommand{\egactioncDef}{(\enterElevatorAct\ p\ensuremath{_1}\ e\ensuremath{_0}\ f\ensuremath{_0})}
\newcommand{\egactiondDef}{(\closeElevatorDoorAct\ e\ensuremath{_0})}
\newcommand{\egactiona}{\textit{\small\egactionaDef}}
\newcommand{\egactionb}{\textit{\small\egactionbDef}}
\newcommand{\egactionc}{\textit{\small\egactioncDef}}
\newcommand{\egactiond}{\textit{\small\egactiondDef}}
\newcommand{\egactionaTiny}{\textit{\tiny\egactionaDef}}
\newcommand{\egactionbTiny}{\textit{\tiny\egactionbDef}}
\newcommand{\egactioncTiny}{\textit{\tiny\egactioncDef}}
\newcommand{\egactiondTiny}{\textit{\tiny\egactiondDef}}
\newcommand{\atoms}{\operatorname{atoms}}

\section{Background}
\label{sec:bg}

In this work we build upon previous work by~\citeauthor{abdulaziz2018formallyVal}.
In their work, they formalised the syntax and semantics of the STRIPS fragment of PDDL in Isabelle/HOL.
The syntax was based on a grammar by~\citeauthor{kovacscomplete}.
Their semantics have two parts: \begin{enumerate*} \item a part defining what it means for a PDDL domain, instance or plan to be well-formed and \item a part defining the execution semantics of PDDL.\end{enumerate*}
The most interesting aspect of well-formedness has to do with typing: since the grammar of PDDL allows for Either-supertype specifications of the form `obj - Either obj1 obj2$\cdots$', this leads to ambiguities in interpreting the sub-typing relation when, for instance, instantiating a parameter with an Either-type by an object of an Either-type.
In this situation, they took the interpretation that this is a valid substitution if each of the object types is reachable, in the sub-typing relation, from at least one of the parameter types.
For the execution semantics, they formalised execution semantics of grounded STRIPS in Isabelle/HOL and, based on that, specified the execution semantics of PDDL by instantiating PDDL action schemata into STRIPS ground actions.

Since most of our work here concerns action execution, which is defined at the level of ground actions, this entire paper discusses ground actions and grounded planning problems.
The main change we made at the lifted action/problem level to the formalisation by Abdulaziz and Lammich is that we add an action duration constraints as a syntactic element to the abstract syntax element modelling action schemata.
We skip here those (modified) definitions and assume that the ground problems and plans were obtained from well-formed PDDL problems and plans, e.g.\ all parameters to predicates and action schemata are well-typed and action durations in the plan respect the duration constraints in the action schemata.
Interested readers should consult the formalisation scripts.

\newcommand{\valuation}{\ensuremath{\mathcal{A}}}
\mohammad{We need to add equality to the logic. This means we need to have types.}
\newcommand{\truetruth}{1}
\newcommand{\falsetruth}{0}
\begin{mydef}[Propositional Formulae]
A propositional formula $\phi$ defined over a set of atoms $V$ is either \begin{enumerate*}\item the verum $\top$, \item an atom $v$, s.t.\ $v\in V$, \item a negated propositional formula $\neg \phi$, \item a conjunction of two propositional formulae $\phi_1\wedge \phi_2$, or \item a disjunction of propositional formulae $\phi_1\vee \phi_2$.
A valuation $\valuation$ 
is a mapping of $V$ to the set $\{\falsetruth,\truetruth\}$.\end{enumerate*}
A valuation $\valuation$ is a model for a formula $\phi$, written $\valuation\models\phi$, iff \begin{enumerate*} \item $\phi$ is the verum, \item if $\phi$ is an atom, then $\valuation(v)=\truetruth$, \item if $\phi$ a negated formula $\neg \phi$, then $\valuation\not\models\phi$, \item if $\phi$ is a conjunction $\phi_1\wedge \phi_2$, then $\valuation\models\phi_1$ and $\valuation\models\phi_2$, and \item if $\phi$ is a disjunction of propositional formulae $\phi_1\vee \phi_2$, then $\valuation\models\phi_1$ or $\valuation\models\phi_2$.
\end{enumerate*}
\end{mydef}

\noindent Note: sometimes, for notational economy, we treat a valuation as a set.
In such cases, a valuation $\valuation:V\rightarrow\{\falsetruth,\truetruth\}$ is interpreted as the set $\{\v\mid \valuation(\v) = \truetruth\}$ and a set of atoms $V$ is interpreted as a valuation which maps any $\v\in V$ to $\truetruth$, and everything else to $\falsetruth$.
Also, in the rest of this paper a \emph{state} is synonymous with a valuation.\footnote{In the formalisation by Abdulaziz and Lammich, on which we base our work, there is support for equalities. This is done by modelling states as sets of formulae. We omit these details here since they are orthogonoal to the the semantics of durative actions.}

\begin{mydef}[Planning Problem]
  \mohammad{What are the restrictions on the formulae in the preconditions?}
  A planning problem $\planningProb$ is a tuple $\langle \propositions,\actions,\initState,\goalState\rangle$, where
  \begin{enumerate*}
    \item $\propositions$ is a set of atoms, each of which is a state characterising proposition,
    \item $\actions$: set of actions, each of which is a tuple $\langle\durActionStart,\durActionEnd,\durActionInv\rangle$ where
  \begin{itemize*}
    \item $\durActionStart,\durActionEnd$ are start and end \emph{snap actions}, and
    \item $\durActionInv$ is a formula defined over the propositions $\propositions$.
  \end{itemize*}
   A snap action $\snapAction$ is a tuple $\langle\snapActionPre,\snapActionAdd,\snapActionDel\rangle$ where
  \begin{itemize*}
    \item $\snapActionPre$ is its precondition, a formula using propositions $P$, 
    \item $\snapActionAdd\subseteq \propositions$ are its positive effects, and
    \item $\snapActionDel\subseteq \propositions$ are its negative effects.
  \end{itemize*}
    \item $\initState$ is a valuation over $\propositions$, modelling the initial state, and
    \item $\goalState$ is the goal state condition, which is a propositional formula defined over $\propositions$.
  \end{enumerate*}
\end{mydef}
\mohammad{We should add to the appendix an explanation of equalities and world models.}
\mohammad{The following note is bogus: we have durative and snap actions. We should probably move it near the definition of a plan.}

\newcommand{\passenger}{\textit{p}}
\newcommand{\floor}{\textit{f}}
\newcommand{\elevator}{\textit{e}}
{\makeatletter
\def\old@comma{,}
\catcode`\,=13
\def,{\ifmmode\old@comma\discretionary{}{}{}\else\old@comma\fi}
\makeatother
\makeatletter
\def\old@dot{.}
\catcode`\.=13
\def.{\ifmmode\old@dot\discretionary{}{}{}\else\old@dot\fi}
\makeatother
 As a running example we use a planning problem, which models an elevator control situation.
There are two passengers (\passenger$_0$ and \passenger$_1$), who want to use two elevators (\elevator$_0$ and \elevator$_1$) to change floors (\floor$_0$ and \floor$_1$). 
The set of state characterising propositions for this planning problem is $\propositions\equiv\bigcup\{ \{\elevatorAtPredInline{e$_i$}{f$_j$},\passengerAtPredInline{p$_k$}{f$_j$},\inElevatorPredInline{p$_k$}{e$_i$},\elevatorDoorOpenPredInline{e$_i$}\} \mid 0 \leq i,j,k \leq 1\}$.
The propositions \elevatorAtPredInline{e$_i$}{f$_j$} and \passengerAtPredInline{p$_k$}{f$_j$} encode at which floor an elevator or a passenger currently is.
The proposition \inElevatorPredInline{p$_k$}{e$_i$} encodes whether a passenger is in an elevator or not.
The proposition \elevatorDoorOpenPredInline{e$_i$} encodes whether an elevator door is open. 
The initial state is 
$\initState\equiv\{\elevatorAtPredInline{e$_0$}{f$_0$},\elevatorAtPredInline{e$_1$}{f$_1$},
  \passengerAtPredInline{p$_0$}{f$_1$},\passengerAtPredInline{p$_1$}{f$_0$},
  \elevatorDoorOpenPredInline{e$_0$}\}$ and its goal is $\goalState\equiv\passengerAtPredInline{p$_0$}{f$_0$}\wedge\passengerAtPredInline{p$_1$}{f$_1$}$.
In the initial state passenger \passenger$_0$ is on floor \floor$_1$ and passenger \passenger$_1$ is on floor \floor$_0$.
Both passengers want to change floors: passenger \passenger$_0$ want to move to floor \floor$_0$ and passenger \passenger$_1$ wants to move to floor \floor$_1$.
This is specified in the goal state formula.
Among many actions, the problem has actions to open one elevator's door $\egactiona\equiv\langle\langle\neg\elevatorDoorOpenPredInline{e$_1$},\emptyset,\emptyset\rangle,\langle\top,\{\elevatorDoorOpenPredInline{e$_1$}\},\emptyset\rangle,\top\rangle$, to have each of the passengers enter one of the elevators $\egactionb\equiv\langle\langle\passengerAtPredInline{p$_0$}{f$_1$}\wedge\elevatorAtPredInline{e$_1$}{f$_1$},\emptyset,\emptyset\rangle, \langle\top,\{\inElevatorPredInline{p$_0$}{e$_1$}\},\{\passengerAtPredInline{p$_0$}{f$_1$}\}\rangle,\elevatorDoorOpenPredInline{e$_1$}\rangle$ and $\egactionc\equiv\langle\langle\passengerAtPredInline{p$_1$}{f$_0$}\wedge\elevatorAtPredInline{e$_0$}{f$_0$},\emptyset,\emptyset\rangle,\langle\top,\{\inElevatorPredInline{p$_1$}{e$_0$}\},\{\passengerAtPredInline{p$_1$}{f$_0$}\}\rangle,\elevatorDoorOpenPredInline{e$_0$}\rangle$, and to close an elevator's door $\egactiond\equiv\langle\langle\elevatorDoorOpenPredInline{e$_0$},\emptyset,\emptyset\rangle,\langle\top,\emptyset,\{\elevatorDoorOpenPredInline{e$_0$}\}\rangle,\top\rangle$.
Each one of the actions has the expected preconditions and effects; e.g.\ moving the elevator requires its door to be closed during the entire move action.
}
\begin{mydef}[Plan]
A plan is a sequence of tuples $\langle\durAction_0,t_0,d_0\rangle,\dots,\langle \durAction_n,t_n,d_n\rangle$, where, for $1\leq i\leq n$, $\durAction_i\in \actions$ is an action, $t_i\in \mathbb{Q}_{\geq 0}$ and $d_i\in \mathbb{Q}_{\geq 0}$ are rational numbers, to which we refer as the starting time point and the duration, respectively.
For a plan $\plan$, we call a sorted sequence $\happTimePoint_0,\dots,\happTimePoint_n$ of the set of rational numbers $\{t\mid\langle a,t,d\rangle\in\plan\} \cup \{t+d\mid\langle a,t,d\rangle\in\plan\}$ the happening time points of the plan, and we denote it by $\happTimePoints(\plan)$.
\end{mydef}
\lukas{We define a plan to be a sequence. Can we use the $\in$-symbol for a plan, e.g. plan action $\langle a,t,d\rangle\in\plan$?}

A valid plan for the elevator running example starts with the following four plan actions: $\langle\egactiona,0,1\rangle$, $\langle\egactionb,1.25,0.5\rangle$, $\langle\egactionc,2,1\rangle$, and $\langle\egactiond,3,1\rangle$.

A central question when it comes to the semantics of temporal planning is that of \emph{plan validity}.
A central notion for defining plan validity is that of action \emph{non-interference}.
\begin{mydef}[Non-interference]
  \label{def:act_non_interf}
  Snap actions $\snapActiona$ and $\snapActionb$ are non-interfering iff
  \begin{enumerate*}
    \item $\atoms(\snapActionPrea) \cap$ $ (\snapActionAddb\cup\snapActionDelb)=\emptyset$,
    \item $\atoms(\snapActionPreb)\cap(\snapActionAdda\cup\snapActionDela)$ $=\emptyset$,
    \item $\snapActionAdda\cap\snapActionDelb=\emptyset$, and
    \item $\snapActionAddb\cap\snapActionDela=\emptyset$.
  \end{enumerate*}
\end{mydef}

The first definition of PDDL~2.1 temporal plan validity was posed by~\citeauthor{fox2003pddl2}~\citeyear{fox2003pddl2}.
Here we outline their definitions informally, due to lack of space.
In their definitions, a central notion was that of a \emph{simple plan}, which can be thought of as a temporal plan whose actions all have zero duration.
Execution semantics of simple plans are similar to the semantics of $\forall$-step parallel plans~\cite{rintanen:06}: more than one action can execute at the same time, given that the actions are non-interfering.
A valid temporal plan is defined one that can be compiled into a valid simple plan.
In this compilation, each durative action $\act$ starting at a time point $t$ and which has duration $d$ is compiled to three snap actions with duration zero.
The first action is $\durActionStart$ and it is scheduled to execute at $t$ in the simple plan.
The second action is $\durActionEnd$ and it is scheduled to execute at $t+d$ in the simple plan.
The third is an action with precondition $\durActionInv$ and no effects, which is scheduled to execute in the simple plan multiple times.
It executes once between every two happening time points of the plan iff the two happening time points are between $t$ and $t+d$, inclusive.

\subsection{Isabelle/HOL}
An ITP is a program which implements a formal mathematical system, i.e.\ a formal language, in which definitions and theorem statements are written, and a set of axioms or derivation rules, using which proofs are constructed.
To prove a fact in an ITP, the user provides high-level steps of a proof, and the ITP fills in the details, at the level of axioms, culminating in a formal proof.

We performed the formalisation and the verification using the interactive theorem prover Isabelle/HOL~\cite{DBLP:books/sp/NipkowPW02}, which is a theorem prover for HOL.
Roughly speaking, HOL can be seen as a combination of functional programming with logic.
Isabelle/HOL supports the extraction of the functional fragment to actual code in various languages~\cite{haftmann2007code}.

Isabelle is designed for trustworthiness: following the Logic for Computable Functions approach (LCF)~\cite{milner1972logic}, a small kernel implements the inference rules of the logic, and, using encapsulation features of ML, it guarantees that all theorems are actually proved by this small kernel.
Around the kernel there is a large set of tools that implement proof tactics and high-level concepts like algebraic datatypes and recursive functions.
Bugs in these tools cannot lead to inconsistent theorems being proved, but only to error messages when the kernel refuses a proof.

All the definitions, theorems and proofs in this paper have been formalised in Isabelle/HOL.
The formalisation can be found online\footnote{DOI:10.5281/zenodo.5784579}.
Usually, some definitions are best represented formally in a way which is different from how they represented informally.
For instance, a for-loop or a function applied to an indexed sequence in the informal definition are formalised in Isabelle/HOL as recursions over lists.
However, there is always a clear resemblance between the formal and the informal definitions and we provide a description associated with the formal definitions.
 \section{Semantics of Temporal Planning}
\label{sec:}

One issue with \citeauthor{fox2003pddl2}'s definition of plan validity is that it is too close to an operational specification of a validation algorithm for temporal plans.
A negative consequence of that becomes evident when trying to formalise the semantics and pin down all the details: the definitions then become very complicated and unreadable.
Although the need for simplifying definitions is generally evident, that need is exacerbated when the definitions are used as specifications against which we formally verify a validator.
In that scenario, the semantics should also provide a description of what the validator should do and they should be easily understandable through visual inspection.
We resolve that by providing a description of the semantics that abstractly describes what a valid plan is, without appealing to algorithmic constructions like the one of induced happening sequences.
We then show that our new definitions are equivalent to the operational definitions of \citeauthor{fox2003pddl2}.

\begin{mydef}[Valid State Sequence]
  \label{def:rec_sets}
  For $t\in\mathbb{Q}_{\ge 0}$ and a plan $\as$, let $\recSetActs_\happTimePoint\equiv\{\durActionStart\mid\langle\durAction,\happTimePoint,d\rangle\in\plan\}\cup\{\durActionEnd\mid\langle\durAction,\happTimePoint-d,d\rangle\in\plan\}$ and $\recSetInvs_\happTimePoint\equiv\{\durActionInv\mid\langle\durAction,\happTimePoint',d\rangle\in\plan\wedge \happTimePoint' <\happTimePoint<\happTimePoint'+d\}$.
Also, let $\happTimePoint_0,\dots,\happTimePoint_n$ be the happening time points of $\as$.
  For a sequence of states $\planningState_0,\dots,\planningState_{n+1}$, we say the sequence of states is valid wrt a plan $\as$ iff, for every happening time point $\happTimePoint_i$ of $\plan$, we have:
  \begin{enumerate*}
    \item $\planningState_i\models\durActionInv$, for every $\durActionInv\in\recSetInvs_{\happTimePoint_i}$,
    \item $\planningState_i\models\snapActionPre$, for every $\snapAction\in\recSetActs_{\happTimePoint_i}$,
    \item $\recSetActs_{\happTimePoint_i}$ is pairwise non-interfering, and
    \item $\planningState_{i+1}=(\planningState_i-\bigcup_{\snapAction\in\recSetActs_{\happTimePoint_i}}\snapActionDel)\cup\bigcup_{\snapAction\in\recSetActs_{\happTimePoint_i}}\snapActionAdd$.
  \end{enumerate*}
\end{mydef}

\begin{mydef}[Valid Plan]
  \label{def:plan_validity_rec}
  Plan $\plan$ is a valid plan for a problem $\planningproblem$ iff there is a state sequence $\planningState_1,\dots,\planningState_{n+1}$ s.t.\ $\initState,\planningState_1,\dots,\planningState_{n+1}$ is valid wrt $\as$ and $\planningState_{n+1}\models\goalState$.
\end{mydef}

Note: above, simultaneous execution of instantaneous ground actions is only allowed for non-interfering ground actions.
Otherwise, simultaneous execution might result in a not well-defined state.
We also use the same ground action interference condition defined by~\citeauthor{fox2003pddl2}.
\mohammad{Nonetheless, our results are valid for other definitions of interference that guarantee that resulting states are unambiguous.}

\begin{figure}
  \centering
  \begin{tikzpicture}[label distance=0mm,scale=1.15]  
  \draw (0,1.4) -- (0,1.6);
  \draw (0,1.5) node[label={[yshift=-1mm]above:{\tiny $\mathstrut 0$}}] {} -- (3,1.5);
  \draw (3,1.4) -- (3,1.6);
  \draw (3,1.5) node[label={[yshift=-1mm]above:{\tiny $\mathstrut 0.75$}}] {} -- (4,1.5);
  \draw (4,1.4) -- (4,1.6);
  \draw (4,1.5) node[label={[yshift=-1mm]above:{\tiny $\mathstrut 1$}}] {} -- (5,1.5);
  \draw (5,1.4) -- (5,1.6);
  \draw (5,1.5) node[label={[yshift=-1mm]above:{\tiny $\mathstrut 1.25$}}] {} -- (6,1.5);
  \draw (6,1.4) -- (6,1.6);
  \draw (6,1.5) node[label={[yshift=-1mm]above:{\tiny $\mathstrut 1.5$}}] {} -- (6.5,1.5);
  \draw[dotted] (6.5,1.5) -- (7.0,1.5);

  \draw (5.5,1.45) -- (5.5,1.55);
  \draw (4.5,1.45) -- (4.5,1.55);
  \draw (3.5,1.45) -- (3.5,1.55);
  \draw (1.5,1.45) -- (1.5,1.55);

  \filldraw[fill=green!80!black!15!white,draw=green!80!black!75!white,line width=0.2mm] (0,0.85) rectangle (4,1.35) {};
  \fill[fill=yellow!60!red!15!white] (5,0.85) rectangle (6.5,1.35);
  \draw[draw=yellow!60!red!75!white,line width=0.2mm] (5,0.85) -- (6.5,0.85);
  \draw[draw=yellow!60!red!75!white,line width=0.2mm] (5,0.85) -- (5,1.35);
  \draw[draw=yellow!60!red!75!white,line width=0.2mm] (5,1.35) -- (6.5,1.35);
  \draw[dotted] (6.4,1.1) -- (6.75,1.1);
  \filldraw[fill=yellow!60!red!15!white,draw=yellow!60!red!75!white,line width=0.2mm] (3,0.2) rectangle (5,0.7);
  \fill[fill=green!80!black!15!white] (6,0.2) rectangle (6.5,0.7);
  \draw[draw=green!80!black!75!white,line width=0.2mm] (6,0.2) -- (6.5,0.2);
  \draw[draw=green!80!black!75!white,line width=0.2mm] (6,0.2) -- (6,0.7);
  \draw[draw=green!80!black!75!white,line width=0.2mm] (6,0.7) -- (6.5,0.7);
  \draw[dotted] (6.4,0.45) -- (6.75,0.45);

  \node at (2,1.1) {\tiny $\mathstrut \egactionaTiny$};
  \node at (5.75,1.1) {\tiny $\mathstrut \egactionbTiny$};
  \node at (4,0.45) {\tiny $\mathstrut \egactioncTiny$};
  \node at (6.25,0.45) {\tiny $\mathstrut \egactiondTiny$};

  \filldraw[fill=blue!10!white,draw=blue!65!white,line width=0.2mm] (6,-0.25) circle (0.125);
  \node[label={below:{\tiny $\mathstrut \recSetInvs_{1.5}$}}] at (6,-0.25) {};
  \filldraw[fill=blue!10!white,draw=blue!65!white,line width=0.2mm] (5,-0.25) circle (0.125);
  \node[label={below:{\tiny $\mathstrut \recSetInvs_{1.25}$}}] at (5,-0.25) {};
  \filldraw[fill=blue!10!white,draw=blue!65!white,line width=0.2mm] (4,-0.25) circle (0.125);
  \node[label={below:{\tiny $\mathstrut \recSetInvs_1$}}] at (4,-0.25) {};
  \filldraw[fill=blue!10!white,draw=blue!65!white,line width=0.2mm] (3,-0.25) circle (0.125);
  \node[label={below:{\tiny $\mathstrut \recSetInvs_{0.75}$}}] at (3,-0.25) {};
  \filldraw[fill=blue!10!white,draw=blue!65!white,line width=0.2mm] (0,-0.25) circle (0.125);
  \node[label={below:{\tiny $\mathstrut \recSetInvs_0$}}] at (0,-0.25) {};
  \draw[dotted] (6.4,-0.25) -- (6.75,-0.25);

  \filldraw[fill=blue!20!white,draw=blue!80!white,line width=0.2mm] (6,-0.9) circle (0.125);
  \node[label={below:{\tiny $\mathstrut \recSetActs_{1.5}$}}] at (6,-0.9) {};
  \filldraw[fill=blue!20!white,draw=blue!80!white,line width=0.2mm] (5,-0.9) circle (0.125);
  \node[label={below:{\tiny $\mathstrut \recSetActs_{1.25}$}}] at (5,-0.9) {};
  \filldraw[fill=blue!20!white,draw=blue!80!white,line width=0.2mm] (4,-0.9) circle (0.125);
  \node[label={below:{\tiny $\mathstrut \recSetActs_1$}}] at (4,-0.9) {};
  \filldraw[fill=blue!20!white,draw=blue!80!white,line width=0.2mm] (3,-0.9) circle (0.125);
  \node[label={below:{\tiny $\mathstrut \recSetActs_{0.75}$}}] at (3,-0.9) {};
  \filldraw[fill=blue!20!white,draw=blue!80!white,line width=0.2mm] (0,-0.9) circle (0.125);
  \node[label={below:{\tiny $\mathstrut \recSetActs_0$}}] at (0,-0.9) {};
  \draw[dotted] (6.4,-0.9) -- (6.75,-0.9);

  \draw[dotted] (6,1.4) -- (6,-0.9);
  \draw[dotted] (5,1.4) -- (5,-0.9);
  \draw[dotted] (4,1.4) -- (4,-0.9);
  \draw[dotted] (3,1.4) -- (3,-0.9);
  \draw[dotted] (0,1.4) -- (0,-0.9);

  \node[label={[xshift=-3mm]right:{\tiny $\mathstrut \recSetInvs_0=\{\}$}}] at (0.0,-1.6) {};
  \node[label={[xshift=-3mm]right:{\tiny $\mathstrut \recSetInvs_{0.75}=\{\egactionaTiny_\textit{inv}\}$}}] at (0.0,-1.9) {};
  \node[label={[xshift=-3mm]right:{\tiny $\mathstrut \recSetInvs_1=\{\egactionaTiny_\textit{inv},\egactioncTiny_\textit{inv}\}$}}] at (0.0,-2.2) {};
  \node[label={[xshift=-3mm]right:{\tiny $\mathstrut \recSetInvs_{1.25}=\{\egactioncTiny_\textit{inv}\}$}}] at (0.0,-2.5) {};
  \node[label={[xshift=-3mm]right:{\tiny $\mathstrut \recSetInvs_{1.5}=\{\egactionbTiny_\textit{inv}\}$}}] at (0.0,-2.8) {};

  \node[label={[xshift=-3mm]right:{\tiny $\mathstrut \recSetActs_0=\{\egactionaTiny_\textit{start}\}$}}] at (3.5,-1.6) {};
  \node[label={[xshift=-3mm]right:{\tiny $\mathstrut \recSetActs_{0.75}=\{\egactioncTiny_\textit{start}\}$}}] at (3.5,-1.9) {};
  \node[label={[xshift=-3mm]right:{\tiny $\mathstrut \recSetActs_1=\{\egactionaTiny_\textit{end}\}$}}] at (3.5,-2.2) {};
  \node[label={[xshift=-3mm]right:{\tiny $\mathstrut \recSetActs_{1.25}=\{\egactioncTiny_\textit{end},$}}] at (3.5,-2.5) {};
  \node[label={[xshift=-3mm]right:{\tiny $\mathstrut \egactioncTiny_\textit{start}\}$}}] at (4.7,-2.8) {};
  \node[label={[xshift=-3mm]right:{\tiny $\mathstrut \recSetActs_{1.5}=\{\egactiondTiny_\textit{start}\}$}}] at (3.5,-3.1) {};
\end{tikzpicture}
   \caption{Concepts from Def.~\ref{def:rec_sets} for the elevator example.
  }\label{fig:new_semantics_illustration}
\end{figure}
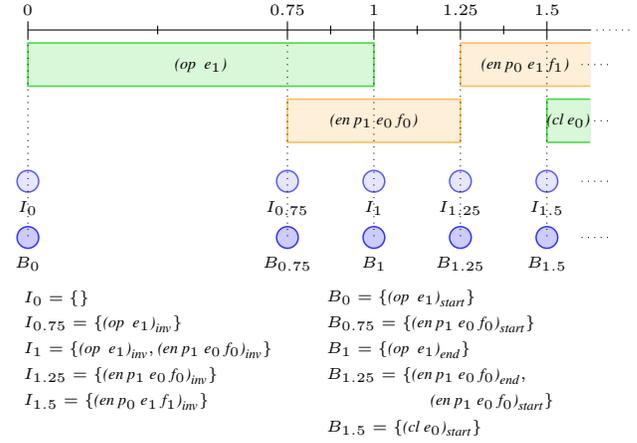

Figure~\ref{fig:new_semantics_illustration} illustrates the beginning of the instantiation of the elevator running example for Def.~\ref{def:rec_sets}. At the top of the illustration a timeline is depicted. Below the timeline the first four actions from the valid plan are shown. At the bottom of the illustration the individual sets needed for the state sequence are shown.

\subsection{Refining the Semantics Towards Executability}

A main goal of this paper is to construct a plan validator which is formally verified wrt the semantics.
We do that by following a step-wise refinement approach~\cite{DBLP:journals/cacm/Wirth71}, where we start from the abstractly specified semantics and refine that specification towards an executable program which fulfils those abstractly specified semantics.
The next step to refine our semantics is to obtain a version that is closer to the executable program.
In this version, we closely follow the semantics given by~\citeauthor{fox2003pddl2}.
A central concept in defining the semantics of temporal plans is that of \emph{happening sequences}.
Intuitively, these are the instantaneous changes that happen over the course of plan execution.

\begin{mydef}[Valid Happening Sequence]\label{def:val_happ_seq}
  A happening $\happening$ is a pair $\langle\happeningActs,\happeningTime\rangle$, where $\happeningActs$ is a set of snap actions and $\happeningTime\in\mathbb{Q}_{\geq 0}$ is the starting time point.
  For a happening sequence $\langle\happeningActs_0,\happeningTime_0\rangle,\dots,\langle\happeningActs_{n},\happeningTime_{n}\rangle$ and a state $\planningState_0$, we call a state sequence $\planningState_1,\dots,\planningState_{n+1}$ to be induced by $\planningState_0$ and the happening sequence iff for every $0 \leq i < m$
  \begin{enumerate*}
    \item $\planningState_i\models\snapActionPre$, for every $\snapAction\in\happeningActs_i$,
    \item $\happeningActs_i$ is pairwise non-interfering, and
    \item $\planningState_{i+1}=\left(\planningState_i-\bigcup_{\snapAction\in\happeningActs_i}\snapActionDel\right)\cup\bigcup_{\snapAction\in\happeningActs_i}\snapActionAdd$, for $0 \leq i \leq n$.
  \end{enumerate*}
  A happening sequence is valid wrt some state iff they induce a valid state sequence.
\end{mydef}

A happening sequence which models the effects and executability of a temporal plan is called an \emph{induced happening sequence}.
The validity of a temporal plan is defined as the validity of the induced happening sequence.

\begin{mydef}[Induced Happening Sequence]
\label{def:ind_happ_seq}
A happening sequence $\langle\happeningActs_0,\happeningTime_0\rangle,\dots,\langle\happeningActs_m,\happeningTime_m\rangle$ is an induced happening sequence for a plan $\plan$ with happening time points $\happTimePoint_0,\dots,\happTimePoint_n$ iff, for all $0\leq i\leq m$, we have that $\happeningActs_i\subseteq \bigcup \{\{\langle\durActionInv,\emptyset,\emptyset\rangle,\durActionStart,\durActionEnd\}\mid\langle\durAction,t,d\rangle\in\plan\}$ and, for all $\langle\durAction,t,d\rangle\in\plan$, 
  \begin{enumerate*}
    \item there is a happening $\langle\happeningActs_i,\happeningTime_i\rangle$ with $\happeningTime_i=t$ and $\durActionStart\in\happeningActs_i$,
    \item there is a happening $\langle\happeningActs_j,\happeningTime_j\rangle$ with $\happeningTime_j=t+d$ and $\durActionEnd\in\happeningActs_j$,
    \item\label{item:ind_happ_seq_iii} for each $0\leq l<n$ with $t\leq\happTimePoint_l<t+d$ there is  $\langle\happeningActs_k,\happeningTime_k\rangle$ with $\happTimePoint_l<\happeningTime_k<\happTimePoint_{l+1}$ and $\langle\durActionInv,\emptyset,\emptyset\rangle\in\happeningActs_k$, and
    \item the starting time points $\happeningTime_0,\dots,\happeningTime_m$ are strictly sorted in an ascending order.
  \end{enumerate*}
\end{mydef}

\lukas{Do we need the extra conjunct in this definition? Every happening only contains correct snap actions for the plan. YES!}

\begin{figure}
  \centering
  \begin{tikzpicture}[label distance=0mm,scale=1.15]  
  \draw (0,1.4) -- (0,1.6);
  \draw (0,1.5) node[label={[yshift=-1mm]above:{\tiny $\mathstrut 0$}}] {} -- (3,1.5);
  \draw (3,1.4) -- (3,1.6);
  \draw (3,1.5) node[label={[yshift=-1mm]above:{\tiny $\mathstrut 0.75$}}] {} -- (4,1.5);
  \draw (4,1.4) -- (4,1.6);
  \draw (4,1.5) node[label={[yshift=-1mm]above:{\tiny $\mathstrut 1$}}] {} -- (5,1.5);
  \draw (5,1.4) -- (5,1.6);
  \draw (5,1.5) node[label={[yshift=-1mm]above:{\tiny $\mathstrut 1.25$}}] {} -- (6,1.5);
  \draw (6,1.4) -- (6,1.6);
  \draw (6,1.5) node[label={[yshift=-1mm]above:{\tiny $\mathstrut 1.5$}}] {} -- (6.5,1.5);
  \draw[dotted] (6.5,1.5) -- (7.0,1.5);

  \draw (5.5,1.45) -- (5.5,1.55);
  \draw (4.5,1.45) -- (4.5,1.55);
  \draw (3.5,1.45) -- (3.5,1.55);
  \draw (1.5,1.45) -- (1.5,1.55);

  \filldraw[fill=green!80!black!15!white,draw=green!80!black!75!white,line width=0.2mm] (0,0.85) rectangle (4,1.35) {};
  \fill[fill=yellow!60!red!15!white] (5,0.85) rectangle (6.5,1.35);
  \draw[draw=yellow!60!red!75!white,line width=0.2mm] (5,0.85) -- (6.5,0.85);
  \draw[draw=yellow!60!red!75!white,line width=0.2mm] (5,0.85) -- (5,1.35);
  \draw[draw=yellow!60!red!75!white,line width=0.2mm] (5,1.35) -- (6.5,1.35);
  \draw[dotted] (6.4,1.1) -- (6.75,1.1);
  \filldraw[fill=yellow!60!red!15!white,draw=yellow!60!red!75!white,line width=0.2mm] (3,0.2) rectangle (5,0.7);
  \fill[fill=green!80!black!15!white] (6,0.2) rectangle (6.5,0.7);
  \draw[draw=green!80!black!75!white,line width=0.2mm] (6,0.2) -- (6.5,0.2);
  \draw[draw=green!80!black!75!white,line width=0.2mm] (6,0.2) -- (6,0.7);
  \draw[draw=green!80!black!75!white,line width=0.2mm] (6,0.7) -- (6.5,0.7);
  \draw[dotted] (6.4,0.45) -- (6.75,0.45);

  \node at (2,1.1) {\tiny $\mathstrut \egactionaTiny$};
  \node at (5.75,1.1) {\tiny $\mathstrut \egactionbTiny$};
  \node at (4,0.45) {\tiny $\mathstrut \egactioncTiny$};
  \node at (6.25,0.45) {\tiny $\mathstrut \egactiondTiny$};

  \filldraw[fill=red!30!white,draw=red!90!white,line width=0.2mm] (5.875,-0.125) rectangle (6.125,-0.375);
  \node[label={below:{\tiny $\mathstrut \happening_9$}}] at (6,-0.25) {};
  \filldraw[fill=red!15!white,draw=red!75!white,line width=0.2mm] (5.375,-0.125) rectangle (5.625,-0.375);
  \node[label={below:{\tiny $\mathstrut \happening_8$}}] at (5.5,-0.25) {};
  \filldraw[fill=red!30!white,draw=red!90!white,line width=0.2mm] (4.875,-0.125) rectangle (5.125,-0.375);
  \node[label={below:{\tiny $\mathstrut \happening_7$}}] at (5,-0.25) {};
  \filldraw[fill=red!15!white,draw=red!75!white,line width=0.2mm] (4.375,-0.125) rectangle (4.625,-0.375);
  \node[label={below:{\tiny $\mathstrut \happening_6$}}] at (4.5,-0.25) {};
  \filldraw[fill=red!30!white,draw=red!90!white,line width=0.2mm] (3.875,-0.125) rectangle (4.125,-0.375);
  \node[label={below:{\tiny $\mathstrut \happening_5$}}] at (4,-0.25) {};
  \filldraw[fill=red!15!white,draw=red!75!white,line width=0.2mm] (3.375,-0.125) rectangle (3.625,-0.375);
  \node[label={below:{\tiny $\mathstrut \happening_4$}}] at (3.5,-0.25) {};
  \filldraw[fill=red!30!white,draw=red!90!white,line width=0.2mm] (2.875,-0.125) rectangle (3.125,-0.375);
  \node[label={below:{\tiny $\mathstrut \happening_3$}}] at (3,-0.25) {};
  \filldraw[fill=red!15!white,draw=red!75!white,line width=0.2mm] (1.375,-0.125) rectangle (1.625,-0.375);
  \node[label={below:{\tiny $\mathstrut \happening_2$}}] at (1.5,-0.25) {};
  \filldraw[fill=red!30!white,draw=red!90!white,line width=0.2mm] (-0.125,-0.125) rectangle (0.125,-0.375);
  \node[label={below:{\tiny $\mathstrut \happening_1$}}] at (0,-0.25) {};
  \draw[dotted] (6.4,-0.25) -- (6.75,-0.25);

  \draw[dotted] (6,1.4) -- (6,-0.25);
  \draw[dotted] (5.5,1.4) -- (5.5,-0.25);
  \draw[dotted] (5,1.4) -- (5,-0.25);
  \draw[dotted] (4.5,1.4) -- (4.5,-0.25);
  \draw[dotted] (4,1.4) -- (4,-0.25);
  \draw[dotted] (3.5,1.4) -- (3.5,-0.25);
  \draw[dotted] (3,1.4) -- (3,-0.25);
  \draw[dotted] (1.5,1.4) -- (1.5,-0.25);
  \draw[dotted] (0,1.4) -- (0,-0.25);

  \node[label={[xshift=-3mm]right:{\tiny $\mathstrut \happening_1\equiv\langle\{\egactionaTiny_\textit{start}\},0.0\rangle$}}] at (0.0,-1) {};
  \node[label={[xshift=-3mm]right:{\tiny $\mathstrut \happening_2\equiv\langle\{\egactionaTiny_\textit{inv}\},0.375\rangle$}}] at (0.0,-1.3) {};
  \node[label={[xshift=-3mm]right:{\tiny $\mathstrut \happening_3\equiv\langle\{\egactioncTiny_\textit{start}\},0.75\rangle$}}] at (0.0,-1.6) {};
  \node[label={[xshift=-3mm]right:{\tiny $\mathstrut \happening_4\equiv\langle\{\egactionaTiny_\textit{inv},\egactioncTiny_\textit{inv}\},0.875\rangle$}}] at (0.0,-1.9) {};
  \node[label={[xshift=-3mm]right:{\tiny $\mathstrut \happening_5\equiv\langle\{\egactionaTiny_\textit{end}\},1.0\rangle$}}] at (0.0,-2.2) {};

  \node[label={[xshift=-3mm]right:{\tiny $\mathstrut \happening_6\equiv\langle\{\egactioncTiny_\textit{inv}\},1.125\rangle$}}] at (4,-1) {};
  \node[label={[xshift=-3mm]right:{\tiny $\mathstrut \happening_7\equiv\langle\{\egactioncTiny_\textit{end},$}}] at (4,-1.3) {};
  \node[label={[xshift=-3mm]right:{\tiny $\mathstrut \egactioncTiny_\textit{start}\},1.25\rangle$}}] at (4.85,-1.6) {};
  \node[label={[xshift=-3mm]right:{\tiny $\mathstrut \happening_8\equiv\langle\{\egactionbTiny_\textit{inv}\},1.375\rangle$}}] at (4,-1.9) {};
  \node[label={[xshift=-3mm]right:{\tiny $\mathstrut \happening_9\equiv\langle\{\egactiondTiny_\textit{start}\},1.5\rangle$}}] at (4,-2.2) {};
\end{tikzpicture}
   \caption{Illustration for the beginning of an induced happening sequence (Def.~\ref{def:ind_happ_seq}) for the elevator-running example. 
  }\label{fig:ind_happ_seq_illustration}
\end{figure}
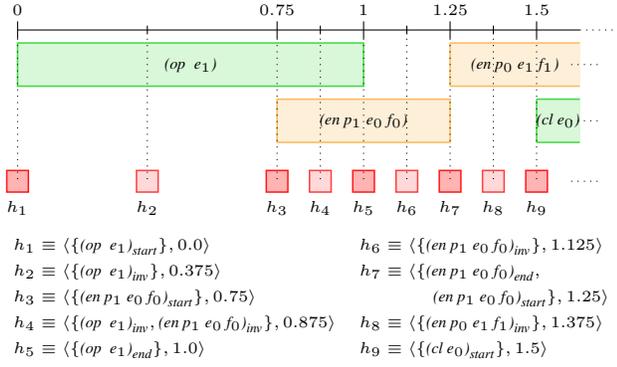

Figure \ref{fig:ind_happ_seq_illustration} illustrates the beginning of an induced happening sequence for the elevator running example.
At the top of the illustration, a timeline with the happening time points is shown. Every start- or end-point of a plan action is a happening time point.
In this example, the first five happening time points are: $0, 0.75, 1, 1.25,$ and $1.5$. Below the timeline the first four actions from the valid plan are shown. For each plan action, the snap actions are placed along the timeline and collected in the happenings, which are symbolized as red squares in the illustration.
E.g.\ for the first plan action $\langle\egactiona,0,1\rangle$, the start snap action $\egactiona_\textit{start}$ is placed at the start of the action, at time point $0$ and collected in happening $h_1$, whereas the end snap action $\egactiona_\textit{end}$ is placed at the end of the action, at time point $1$ and collected in happening $h_5$.
For every two consecutive happening time points the invariants of all currently running actions need to be checked.
Therefore, happening $h_2$ contains the invariant snap action for the first plan action $\egactiona$.
In between the consecutive happening time points $0.75$ and $1$ the action $\egactiona$ is running as well as the action $\egactionc$, hence the happening $h_4$ contains the invariant snap actions for both $\egactiona$ and $\egactionc$.

The illustration in Figure~\ref{fig:ind_happ_seq_illustration} only shows one possible induced happening sequence for the valid plan.
Def.~\ref{def:ind_happ_seq} allows invariant snap actions to be placed arbitrarily in between consecutive happening time points.
This is more general than the definition of \citeauthor{fox2003pddl2}, which arbitrarily restricts the placement of invariant snap actions to be exactly in the middle of happening time points. 
We use this placement of invariant actions in the next section, where we give an executable definition of plan validity.
Based on the notion of valid happening we define the following notion of plan validity, which is closer to the definition of \citeauthor{fox2003pddl2} and to executability.

\begin{mydef}[Valid Plan II]\label{def:val_plan}
  \label{def:plan_validity_original}
  Plan $\plan$ is valid for a planning problem $\planningproblem$ iff $\plan$ has an induced happening sequence $\happening_0\dots,\happening_n$ s.t.\ the happening sequence is valid wrt $\initState$ and $\planningState_{n+1}\models\goalState$, where $\planningState_{n+1}$ is the last state in the induced state sequence.
\end{mydef}

At a higher-level, the contrast between Def.~\ref{def:plan_validity_original}~and~\ref{def:plan_validity_rec} boils down to that the former specifies plan validity in terms of a happening sequence that should be computed, while the latter specifies validity more abstractly.
More specifically, instead of referring to happening sequences, Def.~\ref{def:plan_validity_rec} uses $\recSetActs_\happTimePoint$ and $\recSetInvs_\happTimePoint$, which denote the snap actions executing at time $t$ and the set of invariants which should hold at time $t$, respectively.
Accordingly, for Def.~\ref{def:plan_validity_original} we only assert the existence of a sequence of valid states, which can be formalised, in Isabelle/HOL, as a simple recursion on the happening time points of a plan, instead of asserting the existence of an induced happening sequence as in the case of Def.~\ref{def:plan_validity_original}.
The two definitions are equivalent as shown below.

\begin{mythm}\label{thm:sema_equiv}
  For a planning problem $\planningproblem$, a plan $\as$ is valid according to Def.~\ref{def:plan_validity_original} iff it is valid according to Def.~\ref{def:plan_validity_rec}.
\end{mythm}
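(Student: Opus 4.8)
The plan is to set up a state-by-state correspondence between the abstract states $\planningState_0,\dots,\planningState_{n+1}$ of Def.~\ref{def:plan_validity_rec} and the states a happening sequence induces in Def.~\ref{def:plan_validity_original}, and to read both implications off this correspondence. The guiding observation is that the happening time points $\happTimePoint_0,\dots,\happTimePoint_n$ contain every start time $t$ and every end time $t+d$ of the plan's actions, so on each open subinterval $(\happTimePoint_i,\happTimePoint_{i+1})$ the set of running actions is constant; and every invariant snap action $\langle\durActionInv,\emptyset,\emptyset\rangle$ has empty add and delete sets, so applying it never changes the state. Hence an induced happening sequence is, up to the placement and multiplicity of invariant happenings inside the open subintervals, completely determined: at each $\happTimePoint_i$ there is an action happening whose snap actions are exactly $\recSetActs_{\happTimePoint_i}$, and all state changes occur at these action happenings.

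First I would prove a normalisation lemma: for any happening sequence induced by $\as$ (Def.~\ref{def:ind_happ_seq}) and any state sequence it induces (Def.~\ref{def:val_happ_seq}), the induced states are constant between two consecutive action happenings, and collapsing these constant runs yields states $\planningState_0,\dots,\planningState_{n+1}$ with $\planningState_0=\initState$ and $\planningState_{i+1}=(\planningState_i-\bigcup_{\snapAction\in\recSetActs_{\happTimePoint_i}}\snapActionDel)\cup\bigcup_{\snapAction\in\recSetActs_{\happTimePoint_i}}\snapActionAdd$. This is an induction over the happening sequence, using that invariant snap actions act as the identity on states and that, by items~(i)~and~(ii) of Def.~\ref{def:ind_happ_seq}, the start/end snap actions sitting at $\happTimePoint_i$ are precisely $\recSetActs_{\happTimePoint_i}$.

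For Def.~\ref{def:plan_validity_original}~$\Rightarrow$~Def.~\ref{def:plan_validity_rec} I would take a valid induced happening sequence, collapse it to $\planningState_0,\dots,\planningState_{n+1}$, and check the four clauses of Def.~\ref{def:rec_sets}: clauses~(ii),~(iii),~(iv) transfer verbatim from validity of the action happening at each $\happTimePoint_i$, and for clause~(i) each $\durActionInv\in\recSetInvs_{\happTimePoint_i}$ stems from an action running over the subinterval $(\happTimePoint_{i-1},\happTimePoint_i)$, so item~(iii) of Def.~\ref{def:ind_happ_seq} supplies an invariant happening carrying $\langle\durActionInv,\emptyset,\emptyset\rangle$ in that subinterval, whose evaluation state is the collapsed state $\planningState_i$; its precondition check yields $\planningState_i\models\durActionInv$. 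For the converse I would exhibit the canonical witness: action happenings $\langle\recSetActs_{\happTimePoint_i},\happTimePoint_i\rangle$ together with, in each $(\happTimePoint_i,\happTimePoint_{i+1})$, one invariant happening at the midpoint collecting $\{\langle\durActionInv,\emptyset,\emptyset\rangle\mid\langle\durAction,t,d\rangle\in\plan,\ t\le\happTimePoint_i<t+d\}$. Verifying Def.~\ref{def:ind_happ_seq} is mechanical (strict ascending times from $\happTimePoint_0<\dots<\happTimePoint_n$ and the midpoints, and items~(i)--(iii) from the definition of $\recSetActs_{\happTimePoint_i}$), and validity follows because the collapsed states coincide with the $\planningState_i$, non-interference of an invariant happening is vacuous, and the invariant check inside $(\happTimePoint_i,\happTimePoint_{i+1})$ is exactly clause~(i) of Def.~\ref{def:rec_sets} at $\happTimePoint_{i+1}$, since the invariants placed there are precisely $\recSetInvs_{\happTimePoint_{i+1}}$.

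I expect the main obstacle to be the index bookkeeping that identifies ``the invariant of an action running over the open subinterval $(\happTimePoint_i,\happTimePoint_{i+1})$'' with ``membership in $\recSetInvs_{\happTimePoint_{i+1}}$, evaluated in $\planningState_{i+1}$''. Since $\recSetInvs_{\happTimePoint}$ is defined by a condition on the single point $\happTimePoint$ while item~(iii) of Def.~\ref{def:ind_happ_seq} ranges over subintervals $(\happTimePoint_l,\happTimePoint_{l+1})$ with $t\le\happTimePoint_l<t+d$, I must show these two descriptions pick out exactly the same (action, subinterval) pairs, the delicate case being the last subinterval before an action ends; this is where one crucially uses that $t+d$ is itself a happening time point, so that $t+d>\happTimePoint_i$ forces $t+d\ge\happTimePoint_{i+1}$. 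An off-by-one in this matching is precisely what would drive the two notions of validity apart, so this step must be watertight, whereas the remaining arguments are routine inductions and unfoldings well suited to the step-wise refinement already used in the Isabelle/HOL development.
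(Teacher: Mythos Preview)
Your proposal is correct and follows essentially the same route as the paper's proof: both directions hinge on the two observations you isolate (invariant snap actions act as the identity on states, and the happening at each $\happTimePoint_i$ carries exactly $\recSetActs_{\happTimePoint_i}$), and the $(\Leftarrow)$ witness you build is precisely the paper's sequence $\langle\recSetActs_{\happTimePoint_0},\happTimePoint_0\rangle,\langle\recSetInvs_{\happTimePoint_1},\frac{\happTimePoint_0+\happTimePoint_1}{2}\rangle,\ldots$ with the doubled state sequence $\initState,\planningState_1,\planningState_1,\ldots,\planningState_{n+1},\planningState_{n+1}$. Your explicit normalisation lemma is a clean repackaging of reasoning the paper performs inline, and your diagnosis of the delicate index-matching step (identifying the invariants placed in $(\happTimePoint_i,\happTimePoint_{i+1})$ with $\recSetInvs_{\happTimePoint_{i+1}}$, using that $t+d$ is itself a happening time point) is exactly the point where care is needed---indeed, in the Isabelle formalisation the upper bound in the definition of $\recSetInvs_t$ is non-strict ($t_i\le t'+d$) rather than strict as printed in Def.~\ref{def:rec_sets}, which is what makes that identification go through.
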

\begin{proof}[Proof sketch]
  Let $\happTimePoint_0,\dots,\happTimePoint_n$ be the happening time points of $\as$ after being sorted in ascending order.

  \noindent($\Rightarrow$) From Def.~\ref{def:plan_validity_original}, $\as$ has an induced happening sequence $\langle\happeningActs_0,\happeningTime_0\rangle,\dots,\langle\happeningActs_m,\happeningTime_m\rangle$, and that happening sequence is valid wrt $\initState$. Note that $m\geq n$.
  Our goal here is to show that the induced state sequence of this happening sequence is a valid state sequence, according to Def.~\ref{def:rec_sets}.
  Since the induced happening sequence is strictly sorted according to the starting time of the happenings, we know that the different happenings have different starting points.
  Accordingly, we have, for each $\happTimePoint_i$, where $0\leq i \leq n$, there is a happening $\langle\happeningActs_j,\happeningTime_j\rangle$, s.t.\ $\recSetActs_{\happTimePoint_i}=\happeningActs_j$ and $\happTimePoint_i=\happeningTime_j$.
  Since this induced happening sequence is also a valid happening sequence, the conjuncts (ii), (iii), and (iv) of Def.~\ref{def:rec_sets} hold for the induced state sequence.
  What remains is to show that conjunct (i) holds for the induced state sequence, which states that all action invariants hold during action execution.
  Observe that conjunct (iii) of Def.~\ref{def:ind_happ_seq} asserts that, for each $\langle\act,t,d\rangle\in\as$, there is an action $\langle\durActionInv,\emptyset,\emptyset\rangle$ between each two happenings that happen during the execution of an action $\act$.
  The preconditions of this action ensure that the invariants of the action $\act$ are not violated during its execution.
  Accordingly, conjunct (i) holds for the induced state sequence.

  \noindent($\Leftarrow$) To prove this direction, we need to show that $\as$ has an induced happening sequence, which is valid wrt $\initState$, from a given valid state sequence $\initState,\planningState_1,\dots,\planningState_{n+1}$.
  Consider the happening sequence $\langle\recSetActs_{\happTimePoint_0},\happTimePoint_0\rangle,$ $\langle\recSetInvs_{\happTimePoint_1},\frac{\happTimePoint_0+\happTimePoint_1}{2}\rangle,$ $\langle\recSetActs_{\happTimePoint_1},\happTimePoint_1\rangle,$ $\langle\recSetInvs_{\happTimePoint_2},\frac{\happTimePoint_1+\happTimePoint_2}{2}\rangle,\dots,\langle\recSetActs_{\happTimePoint_{n-1}},\happTimePoint_{n-1}\rangle,\langle\recSetInvs_{\happTimePoint_n},\frac{\happTimePoint_{n-1}+\happTimePoint_n}{2}\rangle, \langle\recSetActs_{\happTimePoint_{n}},\happTimePoint_{n}\rangle$.
  We now need to show that this happening sequence is a valid one, according to Def.~\ref{def:val_happ_seq}.
  It is easy to see that conjunct (ii) of Def.~\ref{def:val_happ_seq} holds for this happening sequence.
  To show that the other two conjuncts of Def.~\ref{def:val_happ_seq} hold, we first need to provide a witness state sequence to which those conjuncts apply.
  The state sequence $\initState,\planningState_1,\planningState_1,\dots,\planningState_{n+1},\planningState_{n+1}$\footnote{This repetition of states is intended: each state $\planningState_i$ occurs first as a result of executing the happening $\langle\recSetActs_{\happTimePoint_{i}},\happTimePoint_{i}\rangle$ at state $\planningState_{i-1}$ and then second as a result of executing the happening $\langle\recSetInvs_{\happTimePoint_i},\frac{\happTimePoint_{i-1}+\happTimePoint_{i}}{2}\rangle$, which has no effects, at state $\planningState_i$.} is the witness:
  \begin{itemize*}
    \item Conjunct (i) of Def.~\ref{def:val_happ_seq} holds for $\initState,\planningState_1,\planningState_1,\dots,\planningState_{n+1},\planningState_{n+1}$ because conjunct (i) of Def.~\ref{def:rec_sets} holds for $\initState,\planningState_1,\dots,\planningState_{n+1}$, which implies that the preconditions in each action in a happening $\langle\recSetActs_{\happTimePoint_{i}},\happTimePoint_{i}\rangle$ are entailed by the state $\planningState_i$, and conjunct (ii) of Def.~\ref{def:rec_sets} also holds for $\initState,\planningState_1,\dots,\planningState_{n+1}$, which implies that the preconditions of each happening $\langle\recSetInvs_{\happTimePoint_i},\frac{\happTimePoint_{i-1}+\happTimePoint_{i}}{2}\rangle$ are entailed by the state $\planningState_{i-1}$.
    \item Conjunct (iii) of Def.~\ref{def:val_happ_seq} holds for $\initState,\planningState_1,\planningState_1,\dots,\planningState_{n+1},\planningState_{n+1}$ because conjunct (iii) of Def.~\ref{def:val_happ_seq} holds for $\initState,\planningState_1,\dots,\planningState_{n+1}$.
  \end{itemize*}
  The last remaining thing is to show that the happening sequence we constructed is an induced happening sequence for $\as$, according to Def.~\ref{def:ind_happ_seq}:
  \begin{itemize*}\item The first two conjuncts of Def.~\ref{def:ind_happ_seq} hold for this happening sequence because from the definition of $\recSetActs$ and $\recSetInvs$.
    \item The third conjunct holds due to the way we construct the happening sequence.
    \item The fourth conjunct holds because we have the happening time points already sorted and the way we construct our happening sequence.
  \end{itemize*}
  This finishes our proof.
\end{proof}

\mohammad{Abstract notes about the Isabelle/HOL proof.}
\mohammad{Any philosophical implications/conclusions (cite ASU thesis).}

\mohammad{Before we close this section, we would like to make a few notes comparing our definition of an induced happening sequence and that of \citeauthor{fox2003pddl2}.
\citeauthor{fox2003pddl2} describe the induced happening sequence by a construction algorithm.
We formalise the induced happening sequence with a predicate which characterises the properties of an induced happening sequence.
Therefore, we need to add an additional constraint to the formalisation of an induced happening sequence: we need to make sure that all ground actions which appear in an induced happening sequence actually are a grounded and instantiated snap action for a plan action appearing the given plan.
In the semantics by \citeauthor{fox2003pddl2} this constraint is implicitly asserted since they use a construction algorithm.}

 \section{An Executable Verified Validator}
\label{sec:}

\SetKwIF{If}{ElseIf}{Else}{if}{}{else if}{else}{endif}
\begin{algorithm}[t!]\DontPrintSemicolon
  \KwSty{function} $\insertActionAlgo(\langle\happeningActs_0,\happeningTime_0\rangle,\dots,\langle\happeningActs_m,\happeningTime_m\rangle,t,\act)$\;
  \ \ \KwSty{for each} $0\leq i < m$\;
  \ \ \ \ \KwSty{if} $\happeningTime_i = t$\;
  \ \ \ \ \ \ \ \ \KwSty{ret} $\langle\happeningActs_0,\happeningTime_0\rangle,\dots,\langle\happeningActs_i\cup\{\act\},\happeningTime_i\rangle,\dots,\langle\happeningActs_m,\happeningTime_m\rangle$\;
  \ \ \ \ \KwSty{if} $\happeningTime_{i+1} = t$\;
  \ \ \ \ \ \ \ \ \KwSty{ret} $\langle\happeningActs_0,\happeningTime_0\rangle,\dots,\langle\happeningActs_{i+1}\cup\{\act\},\happeningTime_{i+1}\rangle,\dots,$\;\ \ \ \ \ \ \ \ \ \ \ \ \ \ \ \ \ $\langle\happeningActs_m,\happeningTime_m\rangle$\;
  \ \ \ \ \KwSty{if} $\happeningTime_i < t < \happeningTime_{i+1}$\;
  \ \ \ \ \ \ \ \ \KwSty{ret} $\langle\happeningActs_0,\happeningTime_0\rangle,\dots,\langle\happeningActs_i,\happeningTime_i\rangle,\langle\{\act\},\happeningTime\rangle,$\; \ \ \ \ \ \ \ \ \ \ \ \ \ \ \ \ \ $\langle\happeningActs_{i+1},\happeningTime_{i+1}\rangle,\dots,\langle\happeningActs_m,\happeningTime_m\rangle$\;
  \vspace{0.5ex}
  \KwSty{function} $\simplifyActionAlgo(\happTimePoint_0,\dots\happTimePoint_n, \langle\act,t,d\rangle,\happSeq)$\;
  \ \ $\happSeq:=\insertActionAlgo(H,t,\durActionStart)$\;
  \ \ $\happSeq:=\insertActionAlgo(H,t+d,\durActionEnd)$\;
  \ \ \KwSty{for each} $0\leq i<n$\;
  \ \ \ \ \KwSty{if} $t\leq\happTimePoint_i < \happTimePoint_{i+1}\leq t+d$\;
  \ \ \ \ \ \ \ \ $\happSeq:=\insertActionAlgo(\happSeq,\frac{\happeningTime_i+\happeningTime_j}{2},\langle\durActionInv,\emptyset,\emptyset\rangle)$\;
  \ \ \KwSty{ret} $\happSeq$\;
  \vspace{0.5ex}
  \KwSty{function} $\simplifyPlanAlgo(\plan)$\;
  \ \ $\happSeq:=\emptyset$\;
  \ \ \KwSty{for each} $\langle\act,t,d\rangle\in\plan$\;
  \ \ \ \ $\simplifyActionAlgo(\happTimePoints(\plan),\langle\act,t,d\rangle,\happSeq)$\;
  \ \ \KwSty{ret} $\happSeq$\;
  \vspace{0.5ex}
  \KwSty{function} $\validHapSeqAlgo(\langle\happeningActs_0,\happeningTime_0\rangle,\dots,\langle\happeningActs_m,\happeningTime_m\rangle, \planningproblem)$\;
  \ \ $\planningState:=\init$\;
  \ \ \KwSty{for each} $0\leq i\leq m$\;
  \ \ \ \ \KwSty{if} $\exists\acta,\actb\in\happeningActs_i$ \KwSty{and} they are interfering\;
  \ \ \ \ \ \ \ \ \KwSty{ret} \false\;
  \ \ \ \ \KwSty{if} $\exists\act\in\happeningActs_i. \planningState\not\models\snapActionPre$\;
  \ \ \ \ \ \ \ \ \KwSty{ret} \false\;
  \ \ \ \ $\planningState:= \left(\planningState-\bigcup_{\snapAction\in\happeningActs_i}\snapActionDel\right)\cup\bigcup_{\snapAction\in\happeningActs_i}\snapActionAdd$\;
  \ \ \KwSty{if} $\planningState\models\goal$\;
  \ \ \ \ \KwSty{ret} $\true$ \;
  \ \ \KwSty{ret} $\false$ \;
  \vspace{0.5ex}
  \KwSty{function} $\checkPlanAlgo(\planningproblem,\plan)$\;
  \ \ $\happSeq:=\simplifyPlanAlgo(\plan)$\;
  \ \ \KwSty{if} $\validHapSeqAlgo(\happSeq,\planningproblem)$\;
  \ \ \ \ \KwSty{ret} ``valid Plan'' \;
  \ \ \KwSty{ret} ``error''\;
  \caption{The executable specification of plan validity, $\checkPlanAlgo$, as pseudo-code.
    In this pseudo-code, $\planningproblem$ denotes a planning problem, $\protect\plan$ a plan to be checked, $\happSeq$ a sequence of happenings, $\happeningActs_i$ a set of snap actions, $\happeningTime_i$ a happening starting time point, and $\happTimePoint$ a happening time point.
  }\label{alg:validplan}
\end{algorithm}

The last part of our work is regarding implementing an executable specification of the semantics, i.e.\ a plan validation algorithm, and formally proving that it is equivalent to the unexecutable specification of the semantics in Def.~\ref{def:plan_validity_rec}.
The formalized semantics are defined with unexecutable abstract mathematical types and depend on several mathematical concepts, e.g. sets and quantifiers.
To obtain an executable validator these mathematical types and concepts need to be replaced with efficient algorithms.
We use step-wise refinement to replace the abstract specifications in the semantics with algorithms.
With step-wise refinement efficient implementations of algorithms can be proven correct by using multiple correctness preserving steps to refine an abstract version of the algorithm towards the efficient implementation.
This allows us to formalize concise semantics and implement an efficient validator wrt.\ those semantics.

We do two main refinement steps: first, we replace the abstract specifications of the semantics with algorithms defined on abstract mathematical types like sets.
This is shown in the pseudo-code of our validation algorithm in Algorithm~\ref{alg:validplan}, where $\checkPlanAlgo$ is the top-level routine.
We then prove the following theorem about it.

\begin{mythm}\label{thm:validator}
  $\checkPlanAlgo(\planningproblem,\plan)=\text{"valid Plan"}$ iff $\plan$ is valid for the planning problem $\planningproblem$ according to Def.~\ref{def:plan_validity_rec}.
\end{mythm}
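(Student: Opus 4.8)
The plan is to split the correctness of $\checkPlanAlgo$ along the two algorithmic components it calls, verify each against the relevant semantic notion, and then glue the pieces together using the semantic equivalence of Theorem~\ref{thm:sema_equiv}. Concretely, $\checkPlanAlgo(\planningproblem,\plan)$ first calls $\simplifyPlanAlgo(\plan)$ to build a concrete happening sequence $\happSeq^\ast$, and then calls $\validHapSeqAlgo(\happSeq^\ast,\planningproblem)$. So I would prove two refinement lemmas — one characterising the output of $\simplifyPlanAlgo$ and one characterising the Boolean returned by $\validHapSeqAlgo$ — and combine them.

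First I would characterise $\validHapSeqAlgo$. I claim that $\validHapSeqAlgo(\langle\happeningActs_0,\happeningTime_0\rangle,\dots,\langle\happeningActs_m,\happeningTime_m\rangle,\planningproblem)$ returns $\true$ iff the happening sequence is valid wrt $\initState$ in the sense of Def.~\ref{def:val_happ_seq} and the final induced state models $\goalState$. This is a loop-invariant argument: I would show by induction on $i$ that, upon reaching iteration $i$ without having returned $\false$, the local variable $\planningState$ equals the $i$-th state $\planningState_i$ of the state sequence induced by $\happeningActs_0,\dots,\happeningActs_{i-1}$ from $\initState$, since the update line of the loop is literally conjunct (iii) of Def.~\ref{def:val_happ_seq}. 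The two early-return tests are exactly the negations of conjuncts (ii) and (i) of Def.~\ref{def:val_happ_seq}, so the loop runs to completion without returning $\false$ iff conjuncts (i)--(iii) hold at every index; the final goal test then contributes $\planningState_{m+1}\models\goalState$.

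Next I would characterise $\simplifyPlanAlgo$. I claim its output $\happSeq^\ast$ is exactly the canonical midpoint induced happening sequence $\langle\recSetActs_{\happTimePoint_0},\happTimePoint_0\rangle,\langle\recSetInvs_{\happTimePoint_1},\frac{\happTimePoint_0+\happTimePoint_1}{2}\rangle,\dots,\langle\recSetActs_{\happTimePoint_n},\happTimePoint_n\rangle$ used as the witness in the ($\Leftarrow$) direction of Theorem~\ref{thm:sema_equiv}. This is an induction over the plan actions, with an inner argument about $\insertActionAlgo$: I would prove that $\insertActionAlgo(\happSeq,t,\act)$ adds $\act$ to the happening at time $t$ if one already exists and otherwise splices in a fresh singleton happening at the correct sorted position, preserving strict sortedness of the starting time points. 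Folding this over the start, end, and midpoint-invariant insertions produced by $\simplifyActionAlgo$ accumulates precisely $\durActionStart$ into the happening at $\happTimePoint_i=t$, $\durActionEnd$ into the happening at $t+d$, and one invariant snap action $\langle\durActionInv,\emptyset,\emptyset\rangle$ into the happening at each midpoint $\frac{\happTimePoint_l+\happTimePoint_{l+1}}{2}$ with $t\le\happTimePoint_l<\happTimePoint_{l+1}\le t+d$; this reproduces the definitions of $\recSetActs$ and $\recSetInvs$ and establishes that $\happSeq^\ast$ satisfies Def.~\ref{def:ind_happ_seq}.

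Combining the two lemmas, $\checkPlanAlgo$ returns ``valid Plan'' iff $\happSeq^\ast$ is valid wrt $\initState$ and its final state models $\goalState$. For the forward direction this immediately exhibits $\happSeq^\ast$ as a witness for Def.~\ref{def:plan_validity_original}, and Theorem~\ref{thm:sema_equiv} then yields validity per Def.~\ref{def:plan_validity_rec}. The main obstacle is the converse: Def.~\ref{def:plan_validity_rec} (via Def.~\ref{def:plan_validity_original}) asserts only that \emph{some} induced happening sequence is valid, whereas the algorithm checks the single specific sequence $\happSeq^\ast$, so I cannot simply reuse an arbitrary witness. I would resolve this by appealing to the \emph{constructive} content of the ($\Leftarrow$) direction of Theorem~\ref{thm:sema_equiv}: from a valid state sequence witnessing Def.~\ref{def:plan_validity_rec} it builds exactly the midpoint sequence $\happSeq^\ast$ and shows it is valid with final state satisfying $\goalState$. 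Since invariant snap actions carry no effects, the induced state at each happening time point is independent of where the invariants are placed, so validity does transfer to $\happSeq^\ast$; making this placement-independence precise is the crux of the argument.
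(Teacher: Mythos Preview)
Your proposal is correct in substance and arrives at the same conclusion, but the decomposition differs from the paper's in one structurally important way.

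The paper breaks the argument into three lemmas plus Theorem~\ref{thm:sema_equiv}: (a) $\simplifyPlanAlgo(\plan)$ is \emph{some} induced happening sequence for $\plan$ (Lemma~\ref{lem:simplify}); (b) $\validHapSeqAlgo$ decides validity of a happening sequence wrt $\initState$ (Lemma~\ref{lem:validhappseq}); and (c) a standalone ``uniqueness'' lemma (Lemma~\ref{lem:happseqeq}) stating that for any two induced happening sequences $\happSeq,\happSeq'$ of the same plan, if $\happSeq$ has an induced state sequence from $\planningState_0$ then so does $\happSeq'$, with the same final state. The converse direction then goes: Def.~\ref{def:plan_validity_rec} $\Rightarrow$ Def.~\ref{def:plan_validity_original} gives \emph{some} valid induced $\happSeq$; Lemma~\ref{lem:happseqeq} transfers this to the particular $\happSeq^\ast$ produced by $\simplifyPlanAlgo$; Lemma~\ref{lem:validhappseq} then yields $\validHapSeqAlgo(\happSeq^\ast,\planningproblem)=\true$.

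You instead try to establish the stronger claim that $\happSeq^\ast$ \emph{equals} the specific midpoint witness constructed inside the $(\Leftarrow)$ direction of Theorem~\ref{thm:sema_equiv}, so that you can reuse that construction directly. This is more brittle than the paper's route: the Theorem~\ref{thm:sema_equiv} witness places a happening $\langle\recSetInvs_{\happTimePoint_i},\frac{\happTimePoint_{i-1}+\happTimePoint_i}{2}\rangle$ at \emph{every} midpoint, including those where $\recSetInvs_{\happTimePoint_i}=\emptyset$, whereas $\simplifyPlanAlgo$ only creates a midpoint happening when at least one plan action's invariant is inserted there, so the two sequences need not coincide literally. You do recognise this in your final sentence, and your ``placement-independence'' observation is precisely what the paper packages as Lemma~\ref{lem:happseqeq}. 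The net effect is that the paper's decomposition is more modular---Lemma~\ref{lem:simplify} only needs $\happSeq^\ast$ to be \emph{an} induced happening sequence, and Lemma~\ref{lem:happseqeq} handles the transfer uniformly---whereas your route front-loads an exact characterisation of $\simplifyPlanAlgo$ that is both harder to establish and ultimately unnecessary once you have placement-independence anyway.
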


\begin{mylem}\label{lem:happseqeq}
  Let $\plan$ be a plan and $\happSeq$ and $\happSeq'$ be induced happening sequences for $\plan$.
  If a state sequence is an induced state sequence by a state $\planningState_0$ and $\happSeq$, then there is a state sequence induced by $\planningState_0$ and $\happSeq'$, where the last state of the two sequences is the same.
\end{mylem}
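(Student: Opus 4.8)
The plan is to exploit the fact that two induced happening sequences $\happSeq$ and $\happSeq'$ for the same plan $\plan$ differ only in how they place the \emph{invariant} snap actions, and that an invariant snap action $\langle\durActionInv,\emptyset,\emptyset\rangle$ has empty add- and delete-sets and is therefore state-preserving. Concretely, let $\happTimePoint_0<\dots<\happTimePoint_n$ be the happening time points of $\plan$. By conjuncts (i), (ii), and (iv) of Def.~\ref{def:ind_happ_seq}, the unique happening occurring at a time point $\happTimePoint_i$ in either sequence carries precisely the start and end snap actions scheduled there, namely the set $\recSetActs_{\happTimePoint_i}$ of Def.~\ref{def:rec_sets}, while every happening whose starting time lies strictly between two consecutive $\happTimePoint_l,\happTimePoint_{l+1}$ consists solely of invariant snap actions of actions running across that interval. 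Since no effectful snap action ever occurs strictly between two happening time points, the state stays \emph{constant} on each open interval $(\happTimePoint_l,\happTimePoint_{l+1})$.

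First I would establish the correspondence of this ``effectful backbone''. The state sequence induced by $\planningState_0$ and $\happSeq'$ is determined uniquely by conjunct (iii) of Def.~\ref{def:val_happ_seq}, so it suffices to verify conjuncts (i) and (ii) for it and to compute its last state. I would prove by induction along the happening time points that the state holding immediately after the happening at $\happTimePoint_i$ is the same $\planningState_{i+1}$ in both sequences: the invariant-only happenings in between leave the state unchanged, and the happening at $\happTimePoint_{i+1}$ applies the \emph{same} set $\recSetActs_{\happTimePoint_{i+1}}$ in both, so the update rule yields the same successor. In particular the last state of the $\happSeq'$-sequence equals the last state $\planningState_{n+1}$ of the $\happSeq$-sequence, which is the required conclusion about final states.

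It then remains to discharge the two validity conjuncts for the $\happSeq'$-sequence. Non-interference (conjunct (ii) of Def.~\ref{def:val_happ_seq}) is immediate: a happening at a real time point $\happTimePoint_i$ is the set $\recSetActs_{\happTimePoint_i}$, whose pairwise non-interference is inherited from the validity of $\happSeq$, and an invariant snap action has empty add/delete sets, so by Def.~\ref{def:act_non_interf} it cannot interfere with anything. For the precondition conjunct (conjunct (i)), the start/end snap actions at $\happTimePoint_i$ are evaluated in state $\planningState_i$, which coincides with the corresponding state in $\happSeq$, so their preconditions hold by validity of $\happSeq$. The interesting case is an invariant snap action $\langle\durActionInv,\emptyset,\emptyset\rangle$ placed by $\happSeq'$ in an open interval $(\happTimePoint_l,\happTimePoint_{l+1})$; it is evaluated in the interval state, which by the constancy observation equals $\planningState_{l+1}$. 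Because $\happSeq$ is an induced happening sequence for the \emph{same} plan, conjunct (iii) of Def.~\ref{def:ind_happ_seq} forces $\happSeq$ to contain an invariant check for the very same action in the same interval, and the validity of $\happSeq$ guarantees $\planningState_{l+1}\models\durActionInv$. Hence the precondition holds in $\happSeq'$ as well.

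The main obstacle I anticipate is exactly this precondition transfer for the invariant snap actions: it hinges on the observation that the state is constant on each open interval between consecutive happening time points, so that it is irrelevant \emph{where} within the interval each sequence schedules its invariant check, together with the bookkeeping needed to align the differently-indexed happenings of $\happSeq$ and $\happSeq'$ against the common backbone $\happTimePoint_0,\dots,\happTimePoint_n$. The remaining arguments, i.e.\ the no-op property of invariant actions and the inheritance of non-interference, are routine once the backbone correspondence has been set up by the induction.
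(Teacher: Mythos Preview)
Your proposal is correct and follows the same approach as the paper: set up a correspondence between the two sequences along the backbone of happening time points $\happTimePoint_0,\dots,\happTimePoint_n$, observe that happenings strictly between consecutive backbone points carry only effect-free invariant snap actions so the state is constant on each open interval, and conclude that the states at backbone indices (and hence the final states) coincide. Your treatment is in fact more thorough than the paper's proof sketch, which records only the backbone mappings $f,f'$ and the equality $\planningState_{f(i)}=\planningState_{f'(i)}$, leaving the verification of conjuncts~(i) and~(ii) of Def.~\ref{def:val_happ_seq} for $\happSeq'$ implicit.
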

\begin{proof}[Proof sketch]
Firstly, let $\happSeq$ ($\happSeq'$) be $(\happeningActs_0,\happeningTime_0),(\happeningActs_1,\happeningTime_1),\dots,$ $(\happeningActs_m,\happeningTime_m)$ ($(\happeningActs_0',\happeningTime_0),(\happeningActs_1',\happeningTime_1'),\dots, (\happeningActs_{m'},\happeningTime_{m'})$), let $\happTimePoint_0,\happTimePoint_1,\dots,\happTimePoint_n$ be the happening time points of $\plan$, and let $\planningState_1,\planningState_2,\dots,\planningState_{m+1}$ ($\planningState_1',\planningState_2',\dots,\planningState_{m'+1}$) be the induced state sequences of $\init$ and $\happSeq$ ($\happSeq'$).
Because of the fourth conjunct of Def.~\ref{def:ind_happ_seq}, we have a monotonically increasing mapping $f$ ($f'$) from $\{0,1,\dots,n\}$ to $\{0,1,\dots,m\}$ ($\{0,1,\dots,m'\}$), such that, for $0\leq i \leq n$, $t_i = r_{f(i)}$ ($t_i = r_{f'(i)}$) and $f(n) = m$ ($f'(n) = m'$).
Also, from the third conjunct of Def.~\ref{def:ind_happ_seq} we have that, for $0\leq i \leq n$, $\happeningActs_{f(i)}$ ($\happeningActs_{f'(i)}$) has no invariant snap actions and, accordingly, $\happeningActs_{f(i)} = \happeningActs_{f'(i)}$, and for $j\in\{0,1,\dots,m\}\setminus\{f(0),f(1),\dots,f(n)\}$ ($j\in\{0,1,\dots,m\}\setminus\{f(0),f(1),\dots,f(n)\}$), $\happeningActs_j$ has only invariant snap actions, i.e.\ $\happeningActs_j \subseteq\{\langle\phi,\emptyset,\emptyset\rangle\mid\phi$ is propositional formula$\}$.
From the two previous statements, we conclude that $\planningState_{f(i)} = \planningState_{f'(i)}$, for $0 < i \leq n$, which finishes our proof.
\end{proof}

\begin{mylem}\label{lem:simplify}
  For any plan $\plan$, $\simplifyPlanAlgo(\plan)$ is an induced happening sequence for the plan $\plan$.
\end{mylem}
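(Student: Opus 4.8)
The plan is to show that the sequence returned by $\simplifyPlanAlgo(\plan)$ satisfies every clause of Def.~\ref{def:ind_happ_seq}, by first isolating invariants of the two helper routines and then threading them through the outer recursion. First I would pin down the behaviour of $\insertActionAlgo$: assuming its input happening sequence has strictly ascending, pairwise distinct starting times, I claim its output (a) is again strictly ascending, (b) contains a happening at time $t$ whose action set includes the inserted snap action, and (c) preserves every existing (action, time) association, i.e.\ no previously present snap action is ever dropped or relocated. The three branches exactly realise this: either $t$ already occurs as a starting time, in which case the new action is merged into that happening, or $t$ lies strictly between two consecutive starting times, in which case a fresh happening is spliced in at the right place. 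This monotonicity---the set of starting times only grows and nothing attached is lost---is the backbone of the whole argument.

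Next I would analyse a single call $\simplifyActionAlgo(\happTimePoints(\plan),\langle\durAction,t,d\rangle,\happSeq)$. Using the invariant above, the two leading insertions guarantee that $\durActionStart$ sits in the happening at time $t$ and $\durActionEnd$ in the happening at time $t+d$, which are conjuncts (i) and (ii) of Def.~\ref{def:ind_happ_seq} for this action. Because $t$ and $t+d$ are themselves happening time points of $\plan$, the loop guard $t\le\happTimePoint_i<\happTimePoint_{i+1}\le t+d$ ranges over exactly the consecutive pairs with $t\le\happTimePoint_i<t+d$, matching the premise of conjunct (iii); for each such pair it inserts $\langle\durActionInv,\emptyset,\emptyset\rangle$ at the midpoint $\frac{\happTimePoint_i+\happTimePoint_{i+1}}{2}$, a point strictly interior to $(\happTimePoint_i,\happTimePoint_{i+1})$ and hence distinct from every happening time point, giving conjunct (iii). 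Since only snap actions of the form $\durActionStart$, $\durActionEnd$, or $\langle\durActionInv,\emptyset,\emptyset\rangle$ are ever inserted, the containment condition $\happeningActs_i\subseteq\bigcup\{\{\langle\durActionInv,\emptyset,\emptyset\rangle,\durActionStart,\durActionEnd\}\mid\langle\durAction,t,d\rangle\in\plan\}$ holds as well.

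Finally I would conclude by induction on the list of plan actions consumed by $\simplifyPlanAlgo$, with induction hypothesis that after processing a prefix all clauses of Def.~\ref{def:ind_happ_seq} hold for every action in that prefix. The base case is the empty prefix with empty $\happSeq$, which is vacuous. In the step, processing $\langle\durAction,t,d\rangle$ establishes its own conjuncts by the previous paragraph, while the monotonicity of $\insertActionAlgo$ preserves the conjuncts already secured for the earlier actions; sortedness (conjunct (iv)) and the containment condition are maintained by every insertion. Crucially, $\happTimePoints(\plan)$ is computed once and passed unchanged to each call, so the happening time points---and therefore the midpoints---stay fixed across the whole recursion.

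The hard part will be making the monotonicity claim precise enough to survive the two nested loops. I must show not merely that the starting times grow, but that each already-placed snap action remains attached to a happening at the same time even as new happenings are spliced around it. The most delicate case for conjunct (iii) is when two distinct actions are both running across the same pair of consecutive happening time points: their invariant actions are inserted at the same midpoint, and I must argue that the first branch of $\insertActionAlgo$ simply merges them into the single happening there, so that the required $\langle\durActionInv,\emptyset,\emptyset\rangle$ is present for each action. Formalising this preservation simultaneously across the inner invariant loop and the outer per-action recursion is where most of the Isabelle bookkeeping will concentrate.
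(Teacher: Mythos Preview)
Your proposal is correct and is essentially a careful elaboration of the paper's own proof, which consists of the single line ``This follows from Def.~\ref{def:ind_happ_seq}.'' Your decomposition into invariants of $\insertActionAlgo$, a per-action analysis of $\simplifyActionAlgo$, and an outer induction over the plan is exactly the natural way to unpack that one-liner, and the paper offers no additional structure beyond it.
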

\begin{proof}[Proof sketch]
This follows from Def.~\ref{def:ind_happ_seq}.
\end{proof}

\begin{mylem}\label{lem:validhappseq}
  For any happening sequence $\happSeq$ and planning problem $\planningproblem$, $\validHapSeqAlgo(\happSeq,\planningproblem)$ is true iff $\happSeq$ is a valid happening sequence wrt.\ $\init$.
\end{mylem}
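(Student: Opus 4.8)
The plan is to prove both directions by exhibiting a single loop invariant that ties the algorithm's running state variable $\planningState$ to the induced state sequence of Def.~\ref{def:val_happ_seq}, and then reading off each of the three conjuncts of that definition from the corresponding line of the code. Write $\happSeq = \langle\happeningActs_0,\happeningTime_0\rangle,\dots,\langle\happeningActs_m,\happeningTime_m\rangle$, and note first that, because conjunct (iii) of Def.~\ref{def:val_happ_seq} fixes each $\planningState_{i+1}$ as a deterministic function of $\planningState_i$ and $\happeningActs_i$, the induced state sequence starting from $\init$ is \emph{unique}; call it $\planningState_0 = \init,\planningState_1,\dots,\planningState_{m+1}$. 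The invariant I would maintain is: whenever the loop reaches the top of iteration $i$ without having returned \false, the variable $\planningState$ equals $\planningState_i$. This is immediate by induction on $i$: the base case $\planningState = \init = \planningState_0$ is the initialisation, and the inductive step holds because the assignment $\planningState := (\planningState - \bigcup_{\snapAction\in\happeningActs_i}\snapActionDel)\cup\bigcup_{\snapAction\in\happeningActs_i}\snapActionAdd$ is literally conjunct (iii).

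For the ($\Leftarrow$) direction I would assume $\happSeq$ is valid wrt $\init$, so the unique induced sequence $\planningState_0,\dots,\planningState_{m+1}$ satisfies conjuncts (i) and (ii) at every index. Using the invariant, at iteration $i$ the variable $\planningState$ equals $\planningState_i$, so the interference test cannot fire (conjunct (ii)) and the precondition test cannot fire (conjunct (i), since $\planningState_i\models\snapActionPre$ for every $\snapAction\in\happeningActs_i$); hence the loop runs to completion. For the ($\Rightarrow$) direction I would assume the algorithm completed all $m+1$ iterations without an early \false. Defining $\planningState_{i+1}$ to be the value of $\planningState$ after iteration $i$, conjunct (iii) holds by construction, the fact that the interference branch was never taken gives conjunct (ii) at every index, and the fact that the precondition branch was never taken---combined with the invariant $\planningState = \planningState_i$ on entry to iteration $i$---gives conjunct (i). Thus the constructed sequence is an induced state sequence witnessing that $\happSeq$ is valid wrt $\init$.

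I expect the main obstacle to be bookkeeping rather than mathematical depth: Def.~\ref{def:val_happ_seq} indexes the induced state sequence as $\planningState_1,\dots,\planningState_{m+1}$ with $\planningState_0$ the given starting state, whereas the loop iterates over happenings $0,\dots,m$ and overwrites a single mutable $\planningState$, so the off-by-one alignment between ``value of $\planningState$ entering iteration $i$'' and ``$\planningState_i$'' must be tracked with care; in Isabelle/HOL this becomes a recursion over the happening list, and the invariant is discharged as part of the correctness proof of that recursion. One further point to reconcile is that the algorithm's final line tests $\planningState\models\goalState$ before returning \true, which is not part of the validity notion of Def.~\ref{def:val_happ_seq}; I would either route this goal test into the surrounding claim---it is exactly the goal clause of Def.~\ref{def:val_plan} feeding Theorem~\ref{thm:validator}---or phrase the lemma about the happening-sequence-checking portion of the loop, so that the biconditional matches the definition cleanly.
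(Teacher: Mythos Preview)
Your proposal is correct and, in spirit, matches the paper's approach: the paper's entire proof is the single sentence ``This follows from Def.~\ref{def:val_happ_seq}'', i.e.\ the algorithm is a line-by-line transcription of the three conjuncts of that definition, and the equivalence is immediate. Your loop-invariant argument spells this out explicitly, which is exactly what one would do when mechanising the proof (and indeed is how the Isabelle formalisation proceeds, by recursion over the happening list).

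Your closing observation about the goal test is well taken and in fact sharper than the paper's informal statement. The algorithm as written in Algorithm~\ref{alg:validplan} returns \true\ only when the final state also satisfies $\goalState$, so the lemma as printed is slightly loose; the formal Isabelle lemma in the appendix makes this explicit, phrasing the correctness of \lstinline[style=isainline]{valid_happ_seq_fromE} as equivalence to ``$\exists M'.\ \text{valid\_happ\_seq}\ M\ hs\ M' \land M' \models \goalState$''. Your suggestion to route the goal clause into the surrounding Theorem~\ref{thm:validator} is precisely how the pieces are assembled there.
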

\begin{proof}[Proof sketch]
This follows from Def.~\ref{def:val_happ_seq}.
\end{proof}

\begin{proof}[Proof of Theorem~\ref{thm:validator}]
  The theorem follows from Lemmas~\ref{lem:simplify},~\ref{lem:validhappseq}~and~\ref{lem:happseqeq}, and Theorem~\ref{thm:sema_equiv}.
\end{proof}

A validator has to be executable and efficient and thus the implementation of a validator is more complicated than the formalisation of the semantics.

In the next step-wise refinement step, the abstract mathematical types, like the set operations in $\validHapSeqAlgo$, are replaced with efficient implementation using balanced trees.
Since this step is completely automated with the Containers Framework in Isabelle/HOL~\cite{lochbihler2013light}, we do not describe the resulting pseudo-code or the proofs of its equivalence to the pseudo-code from Algorithm~\ref{alg:validplan}.

Before we close this section we would like to note two points.
First, the formal version of Algorithm~\ref{alg:validplan} includes checks related to PDDL-level well-formedness, like the correctness of typing of action arguments, etc.
These details are similar to what was done by Abdulaziz and Lammich and we ignore them here as we only focus on grounded problems.
Readers interested in the PDDL-level reasoning can consult the associated formalisation.
Second, as one of our goals was to simplify the semantics, we do not assert the presence of a concrete minimum separation, $\epsilon$, between plan actions.
In our refinement steps, we are able to derive a validation algorithm which uses arbitrary arithmetic on rational numbers and it is formally proved to implement Def.~\ref{def:plan_validity_rec}.
This is an improvement over the approach of~\citeauthor{fox2003pddl2}, who claimed in their paper that it is necessary to accept that numeric conditions, including time, will have to be evaluated to a certain tolerance.
Indeed, VAL~\cite{howey2004val} implements this $\epsilon$ and thus requires the $\epsilon$ as an extra parameter.
This leads to rejecting, otherwise valid, plans if a too large $\epsilon$ is given to VAL.

\lukas{Here describe the validator pen-and-paper.}
\lukas{Describe the proof that the validator implements the semantics.}
\lukas{Describe the fact that arbitrary precision needs no $\epsilon$}

\lukas{Copy\&Paste from thesis: Might reuse?}

\lukas{The semantics require a minimum time separating interfering ground actions. The formalized semantics allow for an arbitrarily small minimum time step.
  Arbitrary-precision numbers usually pose a challenge for computers and programs.
  Therefore, \citeauthor{fox2003pddl2} mention in \cite{fox2003pddl2} the use of a fixed minimum time step, called $\epsilon$, for the semantics to make automated planning and plan validation in temporal PDDL easier.
  \citeauthor{fox2003pddl2} do not specify a concrete value for $\epsilon$.
  The well-known validator VAL \cite{howey2004val} implements this $\epsilon$ and thus requires an extra parameter specifying the value of the $\epsilon$.
  When using the $\epsilon$ time separation seemingly valid plans might be rejected solely due to a too large $\epsilon$ value.
  The implementation of our validator uses arbitrary-precision numbers.
  Any specified number in a PDDL-domain, -problem, or -plan is a finite decimal number.
  All numbers are parsed as rational numbers without loss of any precision.
  Therefore, the plan validity in our validator does not need to rely on a fixed $\epsilon$ value.}

\lukas{We show $\epsilon$ is not a pragmatic choice (see Fox \& Long).}

\paragraph{Parsing Problems and Code Generation}
\label{sec:parsing}

For parsing, we use an open source parser combinator library written in Standard ML.
We note that parsing is a trusted part of our validator, i.e.\ we have no formal proof that the parser actually recognises the desired grammar and produces the correct abstract syntax tree.
However, the parsing combinator approach allows to write concise, clean, and legible parsers, which can be relatively easily checked.

\paragraph{Experimental Evaluation}
\label{sec:experiments}

\mohammad{Search for bugs in VAL: fuzzing?}
\lukas{Describe the setup, performance and any bugs encountered in VAL.}

\lukas{CPU and RAM?? Done.}

Our validator supports the following PDDL requirements: \mbox{\lstinline[style=isainline]{:strips}}, \mbox{\lstinline[style=isainline]{:equality}}, \mbox{\lstinline[style=isainline]{:typing}}, \mbox{\lstinline[style=isainline]{:negative-preconditions}}, \mbox{\lstinline[style=isainline]{:disjunctive-preconditions}}, \mbox{\lstinline[style=isainline]{:durative-actions}}, and \mbox{\lstinline[style=isainline]{:duration-inequalities}}.
For the evaluation of our validator, we compare the validation results and running time of our validator to those of VAL~\cite{howey2004val}.
We use IPC 2014 domains.
We used the temporal planners ITSAT~\cite{rankooh_ghassemsani_2015} and Temporal Fast Downward (TFD)~\cite{eyerich_mattmueller_roeger_2009} to generate plans for the domains and problems. 
In all test cases, the validation outcome between our validator and VAL is the same.
Our validator is consistently slower than VAL, as can be seen in Figure~\ref{fig:val-perf-comp}.
However, it never needs more than one second to validate any plan.
This is a practically acceptable performance, escpecially since our validator uses arbitrary precision arithmetic.
We also note that formally verified code is usually orders of magnitude slower than unverified code due to the difficulty of verifying all code optimisations which are liberally used in unverified code.
\lukas{profiling code. }

\begin{figure}
  \centering
  \begin{tikzpicture}
    \begin{axis}[
      scatter/classes={
          val={mark=o,draw=blue},
          our={mark=x,draw=red}},
      legend pos=north west,
      ylabel=runtime {[s]},
      axis x line*=bottom,
      xticklabels={crewplanning, elevators, openstacks, pegsol, sokoban, Driverlog, MatchCellar, Parking, Satellite, Storage, TurnandOpen},
      xtick={1,...,11},
      x tick label style={rotate=90,anchor=east,font=\tiny},
      height=5cm, width=\linewidth]
      \addplot[scatter,only marks,
        scatter src=explicit symbolic]
      table[meta=label] {test_out_scatter_lrz.txt};
      \legend{VAL, our validator}
    \end{axis}
  \end{tikzpicture}
  \caption{Validation running times for IPC 2014 domains.}\label{fig:val-perf-comp}
\end{figure}
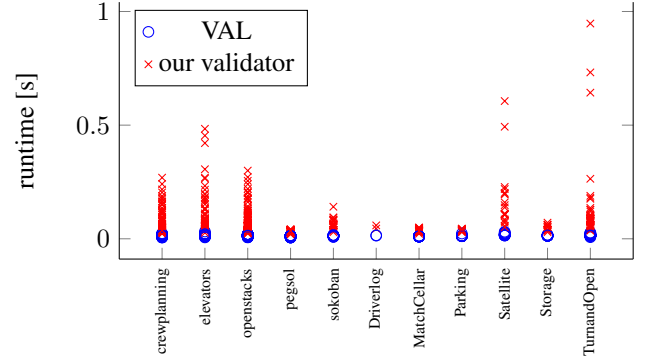

 \section{Discussion}
\label{sec:conclusion}

In this work we presented the first specification of the semantics of the temporal part of PDDL2.1 in a formal mathematical system, namely, Isabelle/HOL.
Specifying language semantics in formal mathematical systems has the advantages of removing any ambiguities and providing the basis to build formally verified tool chains to reason about these languages.
These advantages of formalising language semantics have been reported by researchers who use ITPs to formalise programming language semantics, e.g.\ C~\cite{norrish1998c}, SML~\cite{DBLP:conf/popl/KumarMNO14}, and Rust~\cite{DBLP:journals/pacmpl/0002JKD18}.
One main purpose of our work was to showcase the merits of this methodology to the planning community.

The semantics and validation of the temporal fragment of PDDL have been studied by multiple authors.
We believe our work improves over all the previous approaches in two aspects: the \emph{succinctness} of our semantics specificaiton and the \emph{trusworthiness} of our executable validator.

PDDL2.1 was first introduced during the second international planning competition and its semantics were most comprehensively defined by~\citeauthor{fox2003pddl2}~\citeyear{fox2003pddl2}.
We base our work on the semantics of~\citeauthor{fox2003pddl2}.
One issue with their semantics noted by earlier authors~\citeauthor{DBLP:conf/aaai/ClassenHL07} is that it defines plan validity using an executable plan validation algorithm, which is more complicated than what a specification of semantics ought to be.
We address that by providing simpler semantics and showing it is equivalent to an executable validator.
Our semantics are simpler because they \begin{enumerate*}\item remove the need for a fixed ``$\epsilon$'' separation between interfering actions, requiring only an arbitrary non-zero separation, \item bypass the concept of induced happening sequences, and \item do not require that snap actions representing invariants occur exactly between each two happenings which occur while the invariant has to hold.\end{enumerate*}
Another difference between our work and that of~\citeauthor{fox2003pddl2} is that we specify our semantics in Isabelle/HOL wrt abstract syntax which is very close to PDDL syntax.\footnote{Interested readers should consult the formalisation.}
This gives rise to a more detailed specification of the semantics and leaves less room for ambiguities.

Another tangentially related work is that of \citeauthor{DBLP:conf/aaai/GiganteMMS20}~\citeyear{DBLP:conf/aaai/GiganteMMS20}.
In their work, they studied the complexity of computing plans for different restrictions of the temporal planning as described by~\citeauthor{fox2003pddl2}.

Another notable planning language which includes temporal elements is ANML~\cite{smith2008anml}.
The semantics of a language ``inspired'' by ANML were defined by \citeauthor{DBLP:conf/aaai/CimattiMR17}~\citeyear{DBLP:conf/aaai/CimattiMR17}.
Although \citeauthor{DBLP:conf/aaai/CimattiMR17} use pen-and-paper definitions, the level of detail of their presentation is closer to ours as they specified an abstract syntax for their language, based on which they defined their semantics.
However, our semantics are much more succinct than theirs since we use HOL to specify our semantics, while they specify their semantics in terms of linear temporal logic modulo real arithmetic, which is significantly less expressive than HOL.

Another well-established formalism for studying the semantics of planning and action languages in general is situation calculus~\cite{mccarthy1981some,reiterSitCalc}.
In that line of work, the work by \citeauthor{DBLP:conf/aaai/ClassenHL07}~\citeyear{DBLP:conf/aaai/ClassenHL07} is the most related to this paper.
They showed how to encode a PDDL 2.1 problem as a formula in $\mathcal{ES}$, which is a dialect of first-order logic with interesting computational and meta-theoretic properties introduced by~\citeauthor{DBLP:conf/kr/LakemeyerL04}~\citeyear{DBLP:conf/kr/LakemeyerL04}.
The main merit of that approach, as stated by~\citeauthor{DBLP:conf/aaai/ClassenHL07}, is that their semantics are a declarative specification of the semantics of PDDL 2.1 as opposed to the state transition-based semantics of~\citeauthor{fox2003pddl2}.
This has the advantage that all the computational and meta-theoretic properties of $\mathcal{ES}$ apply to it.
On the other hand, it has the disadvantage of being less understandable than a state transition-based definition, as one needs to first understand $\mathcal{ES}$.
Seen from that perspective, our formalisation three properties:\begin{enumerate*}
\item It is clearly state transition-based as our semantics are in terms recursively defined action execution and state transitions.
This makes it more readable than the formalisation of~\citeauthor{DBLP:conf/aaai/ClassenHL07}.
\item It is also declarative in HOL since, although our top-level definitions are state transition-based, the mechanisms behind the recursive function definitions and the algebraic data types in HOL are all declarative in terms of the axioms of HOL~\cite{DBLP:phd/de/Krauss2009,DBLP:conf/lics/TraytelPB12}.
\item Has less clear computational properties, since general procedures to reason about HOL are all heuristic, since the logic is incomplete.
This disadvantage is not an issue, however, in our context given that our goal is to specify a concise semantics for deriving correct by construction software.
It can, nonetheless, be remedied by formalising the semantics of $\mathcal{ES}$ in HOL and formally showing, within Isabelle/HOL, the correctness of the encoding of PDDL in $\mathcal{ES}$ from~\citeauthor{DBLP:conf/aaai/ClassenHL07}.
\end{enumerate*}

A lot of work on trustworthiness in planning has focused on plan validation.
The state-of-the-art plan validator for temporal plans is VAL~\cite{howey2004val}.
Since VAL implements temporal planning semantics, which is rather involved, in C++, it is difficult to inspect VAL to make sure that it is free of bugs.
This, in a sense, defeats one of the main purposes of plan validators: they are supposed to boost trustworthiness by being much simpler than planning systems, making it less likely for them to have bugs and making them easier to inspect.
One motivation for our work was to avoid that problem by having a separate concise specification of the semantics which precisely describes what the validator implements.
These semantics are then formally connected to an efficient validator.
Another approach to temporal plan validation is the one by~\cite{DBLP:conf/aaai/CimattiMR17}, who compile a given planning problem and a candidate plan into a formula of temporal logic.
Plan validation then becomes a satisfiability task for an LTL formula.
From a trustworthiness perspective, this approach has the disadvantages that one has to trust the code that implements the compilation to LTL and, more importantly, either one has to trust an LTL model-checker or devise a validator that validates models of LTL formulae.
Our approach, on the other hand, trusts a much smaller code base, thanks to the LCF architecture of Isabelle/HOL.

\mohammad{If we do the validation of invariants, we should add to the review other work on domain validation. We can copy that from the 2017 paper.}

As future work, we would like to connect our formalisation of temporal planning to the formalisation of timed automata by~\citeauthor{DBLP:conf/tacas/WimmerM20}~\citeyear{DBLP:conf/tacas/WimmerM20}.
This would enable us to generate formally checkable certificates of unsolvability for temporal planning problems.
It would also enable formally verified checking of different properties of a planning domain similar to the ones by~\citeauthor{DBLP:conf/aaai/CimattiMR17}, but with formal guarantees.

\paragraph{Acknowledgements}
This work was facilitated through the DFG Koselleck Grant NI 491/16-1.

\bibliography{long_paper}
\clearpage
\pagebreak
\section{Appendix: Running Example}

The problem is specified in PDDL by the PDDL-domain \lstinline[style=isainline]{temp-elevators} and PDDL-problem \lstinline[style=isainline]{temp-elevators-prob1} in Listing \ref{lst:temp_elevs_dom} and \ref{lst:temp_elevs_prob}.

\begin{PddlListing}[label={lst:temp_elevs_dom}]{PDDL-domain for elevator planning problem}
(define (domain temp-elevators)
  (:requirements :typing :negative-preconditions :durative-actions :duration-inequalities)
  (:types |\floorType| - object |\elevatorType| - object |\passengerType| - object)|\label{line:temp_elevs_types}|
  (:predicates|\label{line:temp_elevs_preds}|
      (|\elevatorAtPred| ?e - |\elevatorType| ?f - |\floorType|)
      (|\passengerAtPred| ?e - |\passengerType| ?f - |\floorType|)
      (|\inElevatorPred| ?p - |\passengerType| ?e - |\elevatorType|)
      (|\elevatorDoorOpenPred| ?e - |\elevatorType|)
  )
  (:functions (|\elevatorDurationFunc| ?from - |\floorType| ?to - |\floorType|) - number)|\label{line:temp_elevs_funcs}|
  (:durative-action |\moveElevatorAct||\label{line:temp_elevs_moveelev}|
      :parameters (?e - |\elevatorType| ?from - |\floorType| ?to - |\floorType|)
      :duration (= ?duration (|\elevatorDurationFunc| ?from ?to))|\label{line:temp_elevs_moveelev_dconst}|
      :condition (and (at start (|\elevatorAtPred| ?e ?from))
                      (over all (not (|\elevatorDoorOpenPred| ?e))))|\label{line:temp_elevs_moveelev_cond}|
      :effect (and (at start (not (|\elevatorAtPred| ?e ?from))) 
                   (at end (|\elevatorAtPred| ?e ?to)))|\label{line:temp_elevs_moveelev_eff}|
  )
  (:durative-action |\openElevatorDoorAct||\label{line:temp_elevs_openelevdoor}|
      :parameters (?e - |\elevatorType|)
      :duration (= ?duration 1)
      :condition (at start (not (|\elevatorDoorOpenPred| ?e)))
      :effect (at end (|\elevatorDoorOpenPred| ?e))
  )
  (:durative-action |\closeElevatorDoorAct||\label{line:temp_elevs_closeelevdoor}|
      :parameters (?e - |\elevatorType|)
      :duration (= ?duration 1)
      :condition (at start (|\elevatorDoorOpenPred| ?e))
      :effect (at end (not (|\elevatorDoorOpenPred| ?e)))
  )
  (:durative-action |\enterElevatorAct||\label{line:temp_elevs_enterelev}|
      :parameters (?p - |\passengerType| ?e - |\elevatorType| ?f - |\floorType|)
      :duration (<= ?duration 1)
      :condition (and (at start (and (|\passengerAtPred| ?p ?f) (|\elevatorAtPred| ?e ?f)))
                      (over all (|\elevatorDoorOpenPred| ?e)))
      :effect (and (at start (not (|\passengerAtPred| ?p ?f))) 
                   (at end (|\inElevatorPred| ?p ?e)))
  )
  (:durative-action |\exitElevatorAct||\label{line:temp_elevs_exitelev}|
      :parameters (?p - |\passengerType| ?e - |\elevatorType| ?f - |\floorType|)
      :duration (<= ?duration 1)
      :condition (and (at start (and (|\inElevatorPred| ?p ?e) (|\elevatorAtPred| ?e ?f)))
                      (over all (|\elevatorDoorOpenPred| ?e)))
      :effect (and (at start (not (|\inElevatorPred| ?p ?e)))
                   (at end (|\passengerAtPred| ?p ?f)))
  )
)
\end{PddlListing}
\begin{PddlListing}[label={lst:temp_elevs_prob}]{PDDL-problem for elevator planning problem}
(define (problem temp-elevators-prob1)
(:domain elevators)
(:objects f0 f1 - |\floorType| p0 p1 - |\passengerType| e0 e1 - |\elevatorType|)
(:init (|\elevatorAtPred| e0 f0)
    (|\elevatorAtPred| e1 f1)
    (|\elevatorDoorOpenPred| e0)
    (|\passengerAtPred| p0 f1)
    (|\passengerAtPred| p1 f0)
    (= (|\elevatorDurationFunc| f0 f1) 1)
    (= (|\elevatorDurationFunc| f1 f0) 1))
(:goal (and (|\passengerAtPred| p0 f0) (|\passengerAtPred| p1 f1))))
\end{PddlListing}
In the running example, there are two passengers \lstinline[style=isainline]{p0} and \lstinline[style=isainline]{p1}, who want to use two elevators \lstinline[style=isainline]{e0} and \lstinline[style=isainline]{e1}. Passenger \lstinline[style=isainline]{p0} wants to move from floor \lstinline[style=isainline]{f1} to floor \lstinline[style=isainline]{f0}, whereas the passenger \lstinline[style=isainline]{p1} wants to move from floor \lstinline[style=isainline]{f0} to floor \lstinline[style=isainline]{f1}. The PDDL-domain (listing \ref{lst:temp_elevs_dom}) specifies actions to move an elevator ({\lstinlinemacro\moveElevatorAct}), to enter and exit an elevator ({\lstinlinemacro\enterElevatorAct} and {\lstinlinemacro\exitElevatorAct}), and to open and close and elevator door ({\lstinlinemacro\openElevatorDoorAct} and {\lstinlinemacro\closeElevatorDoorAct}). Each one of the actions has the expected preconditions and effects; e.g. {\lstinlinemacro\moveElevatorAct} requires the elevator door to be closed during the entire move actions. For a passenger to enter an elevator, {\lstinlinemacro\enterElevatorAct} requires the elevator door to be open.
A PDDL-domain defines action schemas, which are instantiated with arguments to obtain executable ground actions. For the instantiation of an action schema the conditions and effects of are partitioned by their annotation (\lstinline[style=isainline]{at start}, \lstinline[style=isainline]{at end}, and \lstinline[style=isainline]{over all}) to produce the snap actions $\durActionStart$, $\durActionEnd$, and the invariants $\durActionInv$. The snap action $\durActionStart$ contains all instantiated conditions and effects annotated with \lstinline[style=isainline]{at start}, whereas the snap action $\durActionEnd$ contains all instantiated conditions and effects annotated with \lstinline[style=isainline]{at end}, and $\durActionInv$ contains all instantiated invariants.

\begin{PddlListing}[label={lst:temp_elevs_plan}]{Valid plan for elevator planning problem}
0: (|\openElevatorDoorAct| e1)[1]
1.25: (|\enterElevatorAct| p0 e1 f1)[0.5]
2: (|\closeElevatorDoorAct| e1)[1]
3: (|\moveElevatorAct| e1 f1 f0)[1]
4: (|\openElevatorDoorAct| e1)[1]
5.25: (|\exitElevatorAct| p0 e1 f0)[0.5]
0.75: (|\enterElevatorAct| p1 e0 f0)[0.5]
1.5: (|\closeElevatorDoorAct| e0)[1]
2.5: (|\moveElevatorAct| e0 f0 f1)[1]
3.5: (|\openElevatorDoorAct| e0)[1]
4.75: (|\exitElevatorAct| p1 e0 f1)[0.5]
\end{PddlListing}
Listing \ref{lst:temp_elevs_plan} shows a valid plan for the running example.
A plan for a temporal planning problem is a schedule of actions to execute.
The plan specifies the starting time point and a duration for each action.
In PDDL the starting time points are denoted in front of a colon (\lstinline[style=isainline]{:}) at the beginning of each line.
The duration of an action is denoted in brackets (\lstinline[style=isainline]{[d]}, with duration $d$) at the end of a line.

In the elevator example the end snap action for the action instantiation of {\lstinlinemacro{\closeElevatorDoorAct\ e0}} and the start snap action for the action instantiation of {\lstinlinemacro{\openElevatorDoorAct\ e0}} are interfering ground actions. 
\begin{itemize}
  \item $\text{(\lstinlinemacro{\closeElevatorDoorAct\ e0})}_\textit{end}=\langle\top,\emptyset,\{\elevatorDoorOpenPredInline{e0}\}\rangle$
  \item $\text{(\lstinlinemacro{\openElevatorDoorAct\ e0})}_\textit{start}=\langle\neg\elevatorDoorOpenPredInline{e0},\emptyset,\emptyset\rangle$
\end{itemize}
The end snap action for {\lstinlinemacro{\closeElevatorDoorAct\ e0}} contains the negative effect {\lstinlinemacro{\elevatorDoorOpenPred\ e0}}, which is a precondition of the start snap action of {\lstinlinemacro{\openElevatorDoorAct\ e0}}. This violates the first condition for non-interference between ground actions (Definition \ref{def:act_non_interf}). Intuitively this means, there has to be a non-zero, but arbitrarily small, time interval where the elevator door is open before it can be closed again.

\section{Appendix: Isabelle/HOL Listings}
Isabelle's syntax is a variation of Standard ML combined with standard mathematical notation.
Function application is written infix, and functions can be Curried, i.e.\ function $f$ applied to arguments $x_1~\ldots~x_n$ is written as $f~x_1~\ldots~x_n$ instead of the standard notation $f(x_1,~\ldots~,x_n)$.

\subsection{Formalising the Semantics}
\mohammad{We exploit the fact that Isabelle/HOL has a recursion combinator to express the semantics more elegantly.}
  
The following code snippet shows our formalization of the ground action interference according to Definition~\ref{def:act_non_interf}.
\begin{IsabelleSnippet}[label=isa:groundacts]{Ground Action interference and Happening Sequence validity}
definition acts_non_intrf :: "ground_action \<Rightarrow> ground_action \<Rightarrow> bool" where
  "acts_non_intrf a b \<longleftrightarrow>
  (let
    add\<^sub>a = set(adds(effect a));
    del\<^sub>a = set(dels(effect a));
    pre\<^sub>a = Atom ` atoms (precondition a);
    add\<^sub>b = set(adds(effect b));
    del\<^sub>b = set(dels(effect b));
    pre\<^sub>b = Atom ` atoms (precondition b) in
    pre\<^sub>a \<inter> (add\<^sub>b \<union> del\<^sub>b) = {} \<and>
    pre\<^sub>b \<inter> (add\<^sub>a \<union> del\<^sub>a) = {} \<and>
    add\<^sub>a \<inter> del\<^sub>b = {} \<and>
    add\<^sub>b \<inter> del\<^sub>a = {})"
\end{IsabelleSnippet}
The formalization of the ground action interference is straight forward. The function \lstinline[style=isainline]{acts_non_intrf} returns a boolean value indicating whether the given ground actions are non-interfering according to Definition~\ref{def:act_non_interf}.

The following code snippet shows our formalization of Definition \ref{def:rec_sets}.
\begin{IsabelleSnippet}[label=isa:newsema]{New Semantics}
definition acts_of_plan_at :: "time \<Rightarrow> plan \<Rightarrow> ground_action set" where
  "acts_of_plan_at t\<^sub>i \<pi>s =
    {a\<^sub>\<pi>. \<exists>\<pi>. (t\<^sub>i,\<pi>) \<in> simple_acts \<pi>s \<and> Some a\<^sub>\<pi> = res_inst \<pi>}
  \<union> {a\<^sub>s\<^sub>t\<^sub>a\<^sub>r\<^sub>t. \<exists>\<pi>. (t\<^sub>i,\<pi>) \<in> durative_acts \<pi>s
  \<and> Some a\<^sub>s\<^sub>t\<^sub>a\<^sub>r\<^sub>t = res_inst_snap_action \<pi> At_Start}
  \<union> {a\<^sub>e\<^sub>n\<^sub>d. \<exists>t' \<pi>. (t',\<pi>) \<in> durative_acts \<pi>s
  \<and> t\<^sub>i = t' + duration \<pi>
  \<and> Some a\<^sub>e\<^sub>n\<^sub>d = res_inst_snap_action \<pi> At_End}"

definition invs_of_plan_at :: "time \<Rightarrow> plan \<Rightarrow> (object atom) formula set" where
  "invs_of_plan_at t\<^sub>i \<pi>s =
    {inv. \<exists>t\<^sub>\<pi> \<pi>. (t\<^sub>\<pi>,\<pi>) \<in> durative_acts \<pi>s
  \<and> t\<^sub>\<pi> < t\<^sub>i \<and> t\<^sub>i \<le> t\<^sub>\<pi> + (duration \<pi>)
  \<and> Some inv = res_inst_inv \<pi>}"

fun apply_eff :: "ground_action set \<Rightarrow> world_model \<Rightarrow> world_model" where
  "apply_eff A\<^sub>i M =
    (M - \<Union> (set ` dels ` effect ` A\<^sub>i)) \<union> \<Union> (set ` adds ` effect ` A\<^sub>i)"

fun valid_state_seq :: "world_model \<Rightarrow> time list \<Rightarrow> plan \<Rightarrow> world_model \<Rightarrow> bool" where
  "valid_state_seq M [] \<pi>s M' \<longleftrightarrow> (M = M')"
| "valid_state_seq M (t\<^sub>i#ts) \<pi>s M' \<longleftrightarrow>
  (let A\<^sub>i = acts_of_plan_at t\<^sub>i \<pi>s in
  (\<forall>i \<in> invs_of_plan_at t\<^sub>i \<pi>s. M   \<^sup>c\<TTurnstile>\<^sub>=   i)
  \<and> (\<forall>a \<in> A\<^sub>i. M   \<^sup>c\<TTurnstile>\<^sub>=   precondition a)
  \<and> (\<forall>a \<in> A\<^sub>i. \<forall>b \<in> A\<^sub>i. a \<noteq> b \<longrightarrow> acts_non_intrf a b)
  \<and> valid_state_seq (apply_eff A\<^sub>i M) ts \<pi>s M')"
\end{IsabelleSnippet}
\lukas{renamed \lstinline[style=isainline]{plan_action_path} -> \lstinline[style=isainline]{valid_state_seq} to make more sense with the defintions.}
Note that in our formalization we make a distinction between durative and non-durative (instantaneous) actions. Therefore, the function \lstinline[style=isainline]{simple_acts} return all non-durative plan actions in a given plan, and the function \lstinline[style=isainline]{durative_acts} return all durative plan actions in a given plan.
The function \lstinline[style=isainline]{res_inst_snap_action} instantiates the snap action of a given durative plan action for a given temporal annotation: \lstinline[style=isainline]{At_Start} or \lstinline[style=isainline]{At_End}, corresponding to $\durActionStart$ and $\durActionEnd$. The function \lstinline[style=isainline]{res_inst} instantiates the ground action for a non-durative plan action. The function \lstinline[style=isainline]{res_inst_inv} instantiates the invariant formula ($\durActionInv$) for a given durative plan action.
Given a time point \lstinline[style=isainline]{t} and a plan \lstinline[style=isainline]{\<pi>s}, the function call \lstinline[style=isainline]{acts_of_plan_at t \<pi>s} returns the set $\recSetActs_\happTimePoint$ from Definition \ref{def:rec_sets}. Similarly, given a time point \lstinline[style=isainline]{t} and a plan \lstinline[style=isainline]{\<pi>s}, the function call \lstinline[style=isainline]{invs_of_plan_at t \<pi>s} returns the set $\recSetInvs_\happTimePoint$ from Definition \ref{def:rec_sets}.

The function \lstinline[style=isainline]{apply_eff} applies the effects of a set of ground actions to a given state and returns the resulting state. The function \lstinline[style=isainline]{valid_state_seq} recursively formalizes the validity of a state sequence according to Definition \ref{def:rec_sets}. The state sequence is never explicitly constructed. In each recursion step it is checked that
\begin{enumerate*}
  \item all invariants for the current time point are satisfied by the current state,
  \item the precondition of each action for the current time point is satisfied by the current state, and
  \item all actions for the current time point are pairwise non-interfering.
\end{enumerate*}

The following code snippet then shows our formalization of plan validity according to Definition \ref{def:plan_validity_rec}.
\begin{IsabelleSnippet}[label=isa:newsemathm]{New Semantics}
lemma "valid_plan \<pi>s \<equiv> wf_plan \<pi>s
  \<and> (\<exists>htps M'. htps_seq \<pi>s htps
  \<and> valid_state_seq I htps \<pi>s M' \<and> M'   \<^sup>c\<TTurnstile>\<^sub>=   (goal P))"
  unfolding valid_plan_def valid_plan_from_def by auto
\end{IsabelleSnippet}
\lukas{renamed \lstinline[style=isainline]{valid_plan2} -> \lstinline[style=isainline]{valid_plan} to make more sense with the defintions.}
The predicate \lstinline[style=isainline]{wf_plan} characterizes a well-formed plan, and the predicate \lstinline[style=isainline]{htps_seq} characterizes the sequence of happening time points for a plan. The predicate \lstinline[style=isainline]{valid_plan} characterizes the plan validity of a plan according to Definition \ref{def:plan_validity_rec}.

\subsection{Formalising the Refined Semantics}

Next we are showing our formalization of our refined semantics, which are based around the induced happening sequence (Definition \ref{def:ind_happ_seq} and \ref{def:val_happ_seq}).

The following code snippet shows our formalization of the happening execution (\lstinline[style=isainline]{apply_happ}) and the validity of a happening sequence (\lstinline[style=isainline]{valid_happ_seq}).
\begin{IsabelleSnippet}[label=isa:indhapseqa]{Induced Happening Sequence and Plan validity}
fun apply_happ :: "happening \<Rightarrow> world_model \<Rightarrow> world_model" where
  "apply_happ (t\<^sub>i, A\<^sub>i) M =
  (M - \<Union> (set (map (set o dels o effect) A\<^sub>i))) \<union> \<Union> (set (map (set o adds o effect) A\<^sub>i))"

fun valid_happ_seq :: "world_model \<Rightarrow> happening list \<Rightarrow> world_model \<Rightarrow> bool" where
  "valid_happ_seq M [] M' \<longleftrightarrow> (M = M')"
| "valid_happ_seq M ((t\<^sub>i,A\<^sub>i)#hs) M' \<longleftrightarrow>
  (\<forall>a\<in> set A\<^sub>i. M   \<^sup>c\<TTurnstile>\<^sub>=   precondition a)
  \<and> (\<forall>a \<in> set A\<^sub>i. \<forall>b \<in> set A\<^sub>i. a \<noteq> b \<longrightarrow> acts_non_intrf a b)
  \<and> valid_happ_seq (apply_happ (t\<^sub>i,A\<^sub>i) M) hs M'"
\end{IsabelleSnippet}
The function \lstinline[style=isainline]{apply_happ} returns the state after applying the effects of a given happening to a given state. The validity of a happening sequence (Definition~\ref{def:val_happ_seq}) is formalized with the function \lstinline[style=isainline]{valid_happ_seq}, which is recursive on the given happening sequence. The state sequence is only constructed implicitly through the recursion.
In each recursion step it is checked that
\begin{enumerate*}
  \item the actions within the happening are pairwise non-interfering and
  \item the preconditions of each action in the happening are satisfied by the current state.
\end{enumerate*}

The following code snippet shows our formalization of the predicate that characterizes induced happening sequences (Definition \ref{def:ind_happ_seq}).
\begin{IsabelleSnippet}[label=isa:indhapseqb]{Induced Happening Sequence}
definition ind_happ_seq :: "plan \<Rightarrow> happening list \<Rightarrow> bool" where
  "ind_happ_seq \<pi>s hs \<longleftrightarrow>
  (strict_sorted (map fst hs)
  \<and> (\<forall>(t\<^sub>\<pi>,\<pi>) \<in> simple_acts \<pi>s.
    let g\<^sub>a = the (res_inst \<pi>) in
    \<exists>A. (t\<^sub>\<pi>,A) \<in> set hs \<and> g\<^sub>a \<in> set A)
  \<and> (\<forall>(t\<^sub>\<pi>,\<pi>) \<in> durative_acts \<pi>s.
    let \<pi>\<^sub>s\<^sub>t\<^sub>a\<^sub>r\<^sub>t = the (res_inst_snap_action \<pi> At_Start);
      \<pi>\<^sub>e\<^sub>n\<^sub>d = the (res_inst_snap_action \<pi> At_End);
      \<pi>\<^sub>i\<^sub>n\<^sub>v = the (res_inst_snap_action \<pi> Over_All) in
    (\<exists>A. (t\<^sub>\<pi>,A) \<in> set hs \<and> \<pi>\<^sub>s\<^sub>t\<^sub>a\<^sub>r\<^sub>t \<in> set A) \<and>
    (\<exists>A. (t\<^sub>\<pi> + (duration \<pi>),A) \<in> set hs \<and> \<pi>\<^sub>e\<^sub>n\<^sub>d \<in> set A) \<and>
    (\<forall>t\<^sub>i t\<^sub>j. (consec_htps \<pi>s t\<^sub>i t\<^sub>j
  \<and> t\<^sub>\<pi> \<le> t\<^sub>i \<and> t\<^sub>j \<le> t\<^sub>\<pi> + (duration \<pi>))
    \<longrightarrow> (\<exists>(t',A) \<in> set hs.
      t\<^sub>i < t' \<and> t' < t\<^sub>j \<and> \<pi>\<^sub>i\<^sub>n\<^sub>v \<in> set A)))
  \<and> (\<forall>(t\<^sub>i,A\<^sub>i) \<in> set hs. A\<^sub>i \<noteq> [] \<and>
    (\<forall>a \<in> set A\<^sub>i. \<exists>(t\<^sub>\<pi>,\<pi>) \<in> set \<pi>s.
      inst_of_plan_action \<pi>s (t\<^sub>\<pi>,\<pi>) (t\<^sub>i,a))))"
\end{IsabelleSnippet}
An induced happening sequence is characterized by the following constraints:
\begin{enumerate*}
  \item the induced happening sequence is strictly sorted by the time points of each happening,
  \item for each plan action the induced happening sequence contains the correct snap actions (according to Definition \ref{def:ind_happ_seq}), and
  \item every action in the induced happening sequence is a grounded and instantiated snap action for a plan action.
\end{enumerate*}
The predicate \lstinline[style=isainline]{ind_happ_seq} formalizes these constraints and consists of four conjuncts. The first conjunct ensures that an induced happening sequence is strictly sorted by the time points of happenings. The second and third conjuncts specify the placement of the instantiated ground actions for each plan action according to Definition \ref{def:ind_happ_seq}.
The fourth and final conjunct uses the function \lstinline[style=isainline]{inst_of_plan_action} to ensure that all ground actions in the induced happening sequence are a instantiated snap actions for a plan action.

The following code snippet then shows our formalization of plan validity according to Definition \ref{def:val_plan}.
\begin{IsabelleSnippet}[label=isa:indhapseqc]{Plan validity (II)}
definition plan_happ_path :: "world_model \<Rightarrow> plan \<Rightarrow> world_model \<Rightarrow> bool" where
  "plan_happ_path M \<pi>s M' \<longleftrightarrow>
  (\<exists>hs. ind_happ_seq \<pi>s hs \<and> valid_happ_seq M hs M')"

lemma "valid_plan2 \<pi>s \<equiv> wf_plan \<pi>s
  \<and> (\<exists>M'. plan_happ_path I \<pi>s M' \<and> M'   \<^sup>c\<TTurnstile>\<^sub>=   goal P)"
  unfolding valid_plan2_def valid_plan_from2_def by auto
\end{IsabelleSnippet}
The formalization of the plan validity \lstinline[style=isainline]{valid_plan2} uses an existential quantifier for a valid induced happening sequence.

The next code snippet shows the Lemma that proves the equivalence between the two plan validity definitions (Definition \ref{def:plan_validity_rec} and \ref{def:val_plan}).
\begin{IsabelleSnippet}[label=isa:indhapseqd]{Equivalence Proof}
lemma
  assumes "wf_problem"
  shows "valid_plan \<pi>s \<longleftrightarrow> valid_plan2 \<pi>s"
  unfolding valid_plan_def valid_plan2_def
  using valid_plan_from2_iff[OF assms] by blast
\end{IsabelleSnippet}
Under the assumptions that the given planning problem and plan are well-formed (\lstinline[style=isainline]{wf_problem}) both formalizations (\lstinline[style=isainline]{valid_plan} and \lstinline[style=isainline]{valid_plan2}) are equivalent.

\lukas{notes on difficulty of formal proof. compare sizes -> in appendix}

\subsection{Formalization of Executable Plan Validator}

First, executable refinements for the semantics of ground actions and happenings are implemented.
\begin{IsabelleSnippet}[label=isa:sattodimacs]{Enabled-ness Execution of a Happening}
definition en_exE :: "happening \<Rightarrow> world_model \<Rightarrow> _+world_model" where
  "en_exE \<equiv> \<lambda>(t\<^sub>i,A\<^sub>i) \<Rightarrow> \<lambda>s. do {
  check_allm (\<lambda>a. check (holds s (precondition a))
    (ERRS ''Precondition not satisfied'')) A\<^sub>i;
  check_pairwise (\<lambda>a b.
    check (a \<noteq> b \<longrightarrow> acts_non_intrf a b)
      (ERRS ''Actions in happening interfering'')) A\<^sub>i;
  Error_Monad.return (apply_happ_exec (t\<^sub>i,A\<^sub>i) s)}"
\end{IsabelleSnippet}
The function \lstinline[style=isainline]{en_exE} combines the execution of a happening with an enabled-ness check. A happening $h$ is \textit{enabled} in a state if
\begin{enumerate*}[label=(\roman*)]
  \item the precondition of every ground action in the happening $h$ is satisfied in the state $M$, and
  \item no two ground actions in the happening $h$ are interfering.
\end{enumerate*}
If a given happening $h$ is enabled in a given state $M$ then \lstinline[style=isainline]{en_exE} return the resulting state after applying the effects of the happening $h$. The function \lstinline[style=isainline]{apply_happ_exec} is an executable refinement of the function \lstinline[style=isainline]{apply_happ}.

The following lemma justifies the refinement for the happening execution.
\begin{IsabelleSnippet}[label=isa:sattodimacs]{Justification Enabled-ness Execution of a Happening}
lemma (in wf_ast_problem) en_exE_return_iff:
  assumes "wm_basic s"
  and "\<forall>a \<in> set A\<^sub>i. wf_ground_action a"
  shows "en_exE (t\<^sub>i,A\<^sub>i) s = Inr s' \<longleftrightarrow> happ_enabled (t\<^sub>i,A\<^sub>i) s \<and> s' = apply_happ (t\<^sub>i,A\<^sub>i) s"
  unfolding en_exE_def
  using assms holds_for_wf_fmlas[OF \<open>wm_basic s\<close>]
  symmetric_pred_check_pairwise[OF acts_non_intrf_symmetric]
  apply_happ_exec_refine
  by auto
\end{IsabelleSnippet}
The assumptions for this lemma are that the world model is \textit{basic} and all ground actions in the happening are well-formed. A \textit{basic} world model only contains predicate atoms. The lemma then states, that the function \lstinline[style=isainline]{en_exE} only returns \lstinline[style=isainline]{Inr s'} if the given happening \lstinline[style=isainline]{(t\<^sub>i,A\<^sub>i)} is enabled in the state \lstinline[style=isainline]{s} and the application of the effects yield the world model \lstinline[style=isainline]{s'}. Otherwise the function will return \lstinline[style=isainline]{Inl msg} with an error message \lstinline[style=isainline]{msg}.

The validity of a happening sequence (Definition \ref{def:val_happ_seq}) is implemented recursively and combined with the enabled-ness execution for happenings. Moreover, the entailment of the goal state specifications is directly checked in the base case of the recursion.
\begin{IsabelleSnippet}[label=isa:sattodimacs]{Validity of a Happening Sequence}
fun valid_happ_seq_fromE :: "nat \<Rightarrow> world_model \<Rightarrow> happening list \<Rightarrow> _+unit" where
  "valid_happ_seq_fromE si s [] =
  check (holds s (goal P))
    (ERRS ''Postcondition does not hold'')"
| "valid_happ_seq_fromE si s (h#hs) = do {
  s \<leftarrow> en_exE h s <+? (\<lambda>e _. shows ''at step '' o shows si o shows '': '' o e ());
  valid_happ_seq_fromE (si+1) s hs}"
\end{IsabelleSnippet}
The argument \lstinline[style=isainline]{si} is the index of the current execution step. The implementation is justified with the following lemma.
\begin{IsabelleSnippet}[label=isa:sattodimacs]{Justification for Validity of a Happening Sequence}
lemma (in wf_ast_problem)
  assumes "wm_basic M" and "wf_happ_seq hs"
  shows "valid_happ_seq_fromE k M hs = Inr () \<longleftrightarrow>
  (\<exists> M'. valid_happ_seq M hs M' \<and> M'   \<^sup>c\<TTurnstile>\<^sub>=   (goal P))"
  using assms valid_happ_seq_from_refine valid_happ_seq_fromE_return_iff by auto
\end{IsabelleSnippet}
The function \lstinline[style=isainline]{valid_happ_seq_fromE} only returns \lstinline[style=isainline]{Inr ()} if there exists a state \lstinline[style=isainline]{M'}, such that the given happening sequence \lstinline[style=isainline]{hs} is valid from the starting state \lstinline[style=isainline]{M} to the state \lstinline[style=isainline]{M'} and \lstinline[style=isainline]{M'} entails the goal state specifications.

Next the executable refinements for the semantics of temporal plan validity are described.

The following code snippet shows the implementation of the function \lstinline[style=isainline]{simplify_plan}, which is an executable function that produces an induced happening sequence.
\begin{IsabelleSnippet}[label=isa:sattodimacs]{Executable Construction of an Induced Happening Sequence}
fun insort_happ :: "happening \<Rightarrow> happening list \<Rightarrow> happening list" where
  "insort_happ (t\<^sub>i,A\<^sub>i) [] = [(t\<^sub>i,A\<^sub>i)]"
| "insort_happ (t\<^sub>i,A\<^sub>i) ((t\<^sub>j,A\<^sub>j)#hs) =
  (if t\<^sub>i < t\<^sub>j then (t\<^sub>i,A\<^sub>i)#(t\<^sub>j,A\<^sub>j)#hs
  else if t\<^sub>i = t\<^sub>j then (t\<^sub>j,A\<^sub>i @ A\<^sub>j)#hs
  else (t\<^sub>j,A\<^sub>j)#(insort_happ (t\<^sub>i,A\<^sub>i) hs))"

fun insort_mult_happs :: "happening list \<Rightarrow> happening list \<Rightarrow> happening list" where
  "insort_mult_happs [] hs\<^sub>2 = hs\<^sub>2"
| "insort_mult_happs (h#hs\<^sub>1) hs\<^sub>2 = insort_happ h (insort_mult_happs hs\<^sub>1 hs\<^sub>2)"

definition consec_htps_in_interval :: "time list \<Rightarrow> time \<Rightarrow> time \<Rightarrow> (time \<times> time) list" where
  "consec_htps_in_interval htps t\<^sub>i t\<^sub>j =
  filter (\<lambda>(t,t'). t\<^sub>i \<le> t \<and> t' \<le> t\<^sub>j) (zip (butlast htps) (tl htps))"

fun simplify_action :: "time list \<Rightarrow> (time \<times> plan_action) \<Rightarrow> (time \<times> ground_action) list" where
  "simplify_action htps (t,Simple_Plan_Action n args) = (
  let a = the (resolve_action_schema n) in
    [(t, instantiate_action_schema a args)])"
| "simplify_action htps (t,Durative_Plan_Action n args d) = (
  let a = the (resolve_action_schema n) in
  (t,inst_snap_action a args At_Start) #
  (t+d,inst_snap_action a args At_End) #
  (map (\<lambda>(t\<^sub>i,t\<^sub>j).((t\<^sub>i+t\<^sub>j) / 2,inst_snap_action a args Over_All)) (consec_htps_in_interval htps t (t+d))))"

fun simplify_plan :: "time list \<Rightarrow> plan \<Rightarrow> happening list" where
  "simplify_plan htps [] = []"
| "simplify_plan htps (\<pi>#\<pi>s) =
  insort_mult_happs (map (\<lambda>(t\<^sub>a,a). (t\<^sub>a,[a])) (simplify_action htps \<pi>)) (simplify_plan htps \<pi>s)"
\end{IsabelleSnippet}
The function \lstinline[style=isainline]{simplify_plan} takes two arguments: a sequence of happening time points and a plan. The sequence of happening time points is needed to place the invariant snap actions. The invariant snap actions are placed at $\frac{t_i+t_{i+1}}{2}$ for two consecutive happening time points $t_i$ and $t_{i+1}$. The function \lstinline[style=isainline]{simplify_plan} is recursive on the given plan. In each recursion step alls snap actions for the current plan action (according to Definition \ref{def:ind_happ_seq}) are inserted into the existing happening sequence with the function \lstinline[style=isainline]{insort_mult_happs}. The function \lstinline[style=isainline]{insort_mult_happs} uses the function \lstinline[style=isainline]{insort_happ} to inserts multiple happenings into a existing happening sequence. The function \lstinline[style=isainline]{simplify_action} return a list of all snap actions for a given plan action (according to Definition \ref{def:ind_happ_seq}). The function \lstinline[style=isainline]{consec_htps_in_interval} returns all intervals of consecutive happening time points that lie in between two given time points.

The following lemma proves the correctness of the function \lstinline[style=isainline]{simplify_plan}.
\begin{IsabelleSnippet}[label=isa:sattodimacs]{Correctnes of the Construction of an Induced Happening Sequence}
lemma (in wf_ast_problem)
  assumes "wf_plan \<pi>s"
  shows "ind_happ_seq \<pi>s (simplify_plan (htps_exec \<pi>s) \<pi>s)"
  using assms by (rule simplify_plan_correct)
\end{IsabelleSnippet}
The lemma proves, that given a well-formed plan (\lstinline[style=isainline]{wf_plan \<pi>s}) the function \lstinline[style=isainline]{simplify_plan} constructs an induced happening sequence for the given plan \lstinline[style=isainline]{\<pi>s}. The function \lstinline[style=isainline]{htps_exec} constructs the sequence of happening time points for a given plan.

\subsection{Efficiency Refinement of Executable Plan Validator}

The following code snippet shows the more efficient refinement of the function \lstinline[style=isainline]{simplify_plan}, that uses an AVL-tree instead of a list.
\begin{IsabelleSnippet}[label=isa:sattodimacs]{Efficient Construction of the Induced Happening Sequence with an AVL-tree}
fun insert_happ_to_tree :: "happening \<Rightarrow> happening tree_ht \<Rightarrow> happening tree_ht" where
  "insert_happ_to_tree (t\<^sub>i,A\<^sub>i) \<langle>\<rangle> = avl_node \<langle>\<rangle> (t\<^sub>i,A\<^sub>i) \<langle>\<rangle>"
| "insert_happ_to_tree (t\<^sub>i,A\<^sub>i) \<langle>l,((t\<^sub>j,A\<^sub>j),h),r\<rangle> = (
  if t\<^sub>i < t\<^sub>j then avl_balL (insert_happ_to_tree (t\<^sub>i,A\<^sub>i) l) (t\<^sub>j,A\<^sub>j) r
  else if t\<^sub>i = t\<^sub>j then avl_node l (t\<^sub>j,A\<^sub>i @ A\<^sub>j) r
  else avl_balR l (t\<^sub>j,A\<^sub>j) (insert_happ_to_tree (t\<^sub>i,A\<^sub>i) r))"
  
fun insert_timed_ground_acts_to_tree :: "(time \<times> ground_action) list \<Rightarrow> happening tree_ht \<Rightarrow> happening tree_ht" where
  "insert_timed_ground_acts_to_tree [] htree = htree"
| "insert_timed_ground_acts_to_tree ((t\<^sub>a,a)#as) htree =
  insert_happ_to_tree (t\<^sub>a,[a]) (insert_timed_ground_acts_to_tree as htree)"
  
fun simplify_planE_avl' :: "_ \<Rightarrow> (object, type) mapping \<Rightarrow> (object, rat) mapping \<Rightarrow> time list \<Rightarrow> plan \<Rightarrow> _+happening tree_ht" where
  "simplify_planE_avl' stg mp mp_fe htps [] = do { Error_Monad.return \<langle>\<rangle> }"
| "simplify_planE_avl' stg mp mp_fe htps (\<pi>#\<pi>s) = do {
  as \<leftarrow> simplify_actionE G mp mp_fe htps \<pi>;
  htree \<leftarrow> simplify_planE_avl' G mp mp_fe htps \<pi>s;
  Error_Monad.return (insert_timed_ground_acts_to_tree as htree)}"

fun simplify_planE_avl :: "_ \<Rightarrow> (object, type) mapping \<Rightarrow> (object, rat) mapping \<Rightarrow> time list \<Rightarrow> plan \<Rightarrow> _+happening list" where
  "simplify_planE_avl stg mp mp_fe htps \<pi>s = do {
  htree \<leftarrow> simplify_planE_avl' G mp mp_fe htps \<pi>s;
  Error_Monad.return (avl_inorder htree)}"
\end{IsabelleSnippet}
The function \lstinline[style=isainline]{simplify_planE_avl'} constructs an AVL-tree containing all happenings for the constructed induced happening sequence. The function \lstinline[style=isainline]{simplify_planE_avl} then simply uses an inorder-traversal (\lstinline[style=isainline]{avl_inorder}) on the AVL-tree to obtain the induced happening sequence.

The argument \lstinline[style=isainline]{stg} is the instantiated subtype relation for an implicitly fixed domain and the argument \lstinline[style=isainline]{mp} is a map from object names to types. For a given plan action the function \lstinline[style=isainline]{simplify_actionE} returns all snap actions and checks the well-formedness of the given plan action. The implementations for the efficient type checking and its verification are reused from \cite{abdulaziz2018formallyVal} without any modifications. The functions \lstinline[style=isainline]{insert_timed_ground_acts_to_tree} and \lstinline[style=isainline]{insert_happ_to_tree} are used to insert elements to the AVL tree.

\begin{IsabelleSnippet}[label=isa:sattodimacs]{Correctnes of the Construction of the Induced Happening Sequence}
lemma (in wf_ast_problem)
  assumes "wf_domain"
  and "simplify_planE_avl STG mp_objT mp_Fevl (htps_exec \<pi>s) \<pi>s = Inr hs"
  shows "ind_happ_seq \<pi>s hs" and "wf_plan \<pi>s"
  using assms simplify_planE_avl_equiv simplify_planE_return_iff
  htps_exec_sorted htps_exec_distinct simplify_plan_correct
  by (auto simp: strict_sorted_iff)
\end{IsabelleSnippet}
This lemma proves that if \lstinline[style=isainline]{simplify_planE_avl} returns \lstinline[style=isainline]{Inr hs} then the returned happening sequence \lstinline[style=isainline]{hs} is an induced happening sequence and the plan \lstinline[style=isainline]{\<pi>s} is well-formed.

Finally, the construction of the induced happening sequence (\lstinline[style=isainline]{simplify_planE_avl}) and the implementation for the validity a happening sequence (\lstinline[style=isainline]{valid_happ_seq_fromE}) are combined in the function \lstinline[style=isainline]{check_plan}, which gives us the desired validator.
\begin{IsabelleSnippet}[label=isa:sattodimacs]{Implementation of Validtor function}
definition "check_plan P \<pi>s \<equiv> do {
  let stg = ast_domain.STG (ast_problem.domain P);
  let conT = ast_domain.mp_constT (ast_problem.domain P);
  let mp = ast_problem.mp_objT P;
  let mp_fe = ast_problem.mp_Fevl P;
  check_wf_problem P stg conT mp;
  hs \<leftarrow> ast_problem.simplify_planE_avl stg mp mp_fe (htps_exec \<pi>s) \<pi>s;
  ast_problem.valid_happ_seq_fromE 1 (ast_problem.I P) hs
} <+? (\<lambda>e. String.implode (e () ''''))"
\end{IsabelleSnippet}

The following theorem proves our validator correct.
\begin{IsabelleSnippet}[label=isa:sattodimacs]{Correctness of our Validator}
theorem "check_plan P \<pi>s = Inr () \<longleftrightarrow> ast_problem.wf_problem P \<and> ast_problem.valid_plan P \<pi>s"
  by (rule check_plan_return_iff)
\end{IsabelleSnippet}
Our validator \lstinline[style=isainline]{check_plan} only returns \lstinline[style=isainline]{Inr ()} if the problem is well-formed and the plan is valid.

 \end{document}